\newtheorem{assumption}{Assumption}
\newtheorem{proposition}{Proposition}
\newtheorem{theorem}{Theorem}
\newcolumntype{Y}{>{\centering\arraybackslash}X}
\newcolumntype{C}[1]{>{\centering\arraybackslash}p{#1}}
\begin{document}

\title {Mitigating Resolution-Drift in Federated Learning: Case of Keypoint Detection }

\author{Taeheon Lim, Joohyung Lee~\IEEEmembership{Senior Member,~IEEE}, Kyungjae Lee~\IEEEmembership{Member,~IEEE}, Jungchan Cho~\IEEEmembership{Member,~IEEE}
\thanks{This work was supported by Institute of Information \& communications Technology Planning \& Evaluation (IITP)
A grant funded by the Korean government (MSIT) (No. 2022-0-00871, Development of AI Autonomy and Knowledge.
Enhancement for AI Agent Collaboration) and a National Research Foundation of Korea (NRF) grant funded by
The Korean Government (MSIT) (No. RS-2023-00211357).}

\thanks{Taeheon Lim is with the Department of Artificial Intelligence, Chung-Ang University, 84 Heukseok-ro, Dongjak-gu, Seoul, 06974, Republic of Korea (e-mail: icecoffee2500@cau.ac.kr).}
\thanks{Joohyung Lee and Jungchan Cho are with the Department of Computing, Gachon University, Seongnam 13120, Rep. of Korea (e-mail: j17.lee@gachon.ac.kr; thinkai@gachon.ac.kr).}
\thanks{Kyungjae Lee is with the Department of Statistics, Korea University, 145 Anam-ro, Seongbuk-gu, Seoul, 02841, Republic of Korea (e-mail: kyungjae\_lee@korea.ac.kr).}

\thanks{(Corresponding authors: Kyungjae Lee and Jungchan Cho)}}

\maketitle

\begin{abstract}
The Federated Learning (FL) approach enables effective learning across distributed systems, while preserving user data privacy. To date, research has primarily focused on addressing statistical heterogeneity and communication efficiency, through which FL has achieved success in classification tasks. However, its application to non-classification tasks, such as human pose estimation, remains underexplored.
This paper identifies and investigates a critical issue termed ``resolution-drift,'' where performance degrades significantly due to resolution variability across clients. Unlike class-level heterogeneity, resolution drift highlights the importance of resolution as another axis of not independent or identically distributed (non-IID) data. To address this issue, we present resolution-adaptive federated learning (RAF), a method that leverages heatmap-based knowledge distillation. Through multi-resolution knowledge distillation between higher-resolution outputs (teachers) and lower-resolution outputs (students), our approach enhances resolution robustness without overfitting.
Extensive experiments and theoretical analysis demonstrate that RAF not only effectively mitigates resolution drift and achieves significant performance improvements, but also can be integrated seamlessly into existing FL frameworks. 
Furthermore, although this paper focuses on human pose estimation, our t-SNE analysis reveals distinct characteristics between classification and high-resolution representation tasks, supporting the generalizability of RAF to other tasks that rely on preserving spatial detail.
\end{abstract}

\begin{IEEEkeywords}
Federated learning, high-resolution regression, multi-resolution, and knowledge distillation.
\end{IEEEkeywords}

\section{Introduction}
\label{sec:intro}

\IEEEPARstart{T}{he} rapid increase in edge devices' computational power has enabled local training on Internet of Things (IoT) devices using locally collected data~\cite{fedadg, ce-fedavg}. This paradigm shift facilitates machine learning without transmitting raw data to a central server, thereby overcoming the limitations of traditional centralized learning approaches, which often rely on constrained public datasets.
Consequently, distributed machine learning~\cite{edge_intelligence, edge_computing, decentralized_edge} has become an essential technology in edge computing and Internet of Things (IoT) environments. Federated Learning (FL)~\cite{fedavg, federated_learning} embodies this paradigm while preserving user data privacy.
FedAvg~\cite{fedavg} enables multiple clients' models to undergo decentralized training while preserving their private data and aggregating local updates into a shared global model.

\begin{figure}[t]
    \centering    
    \includegraphics[width=1.0\linewidth]{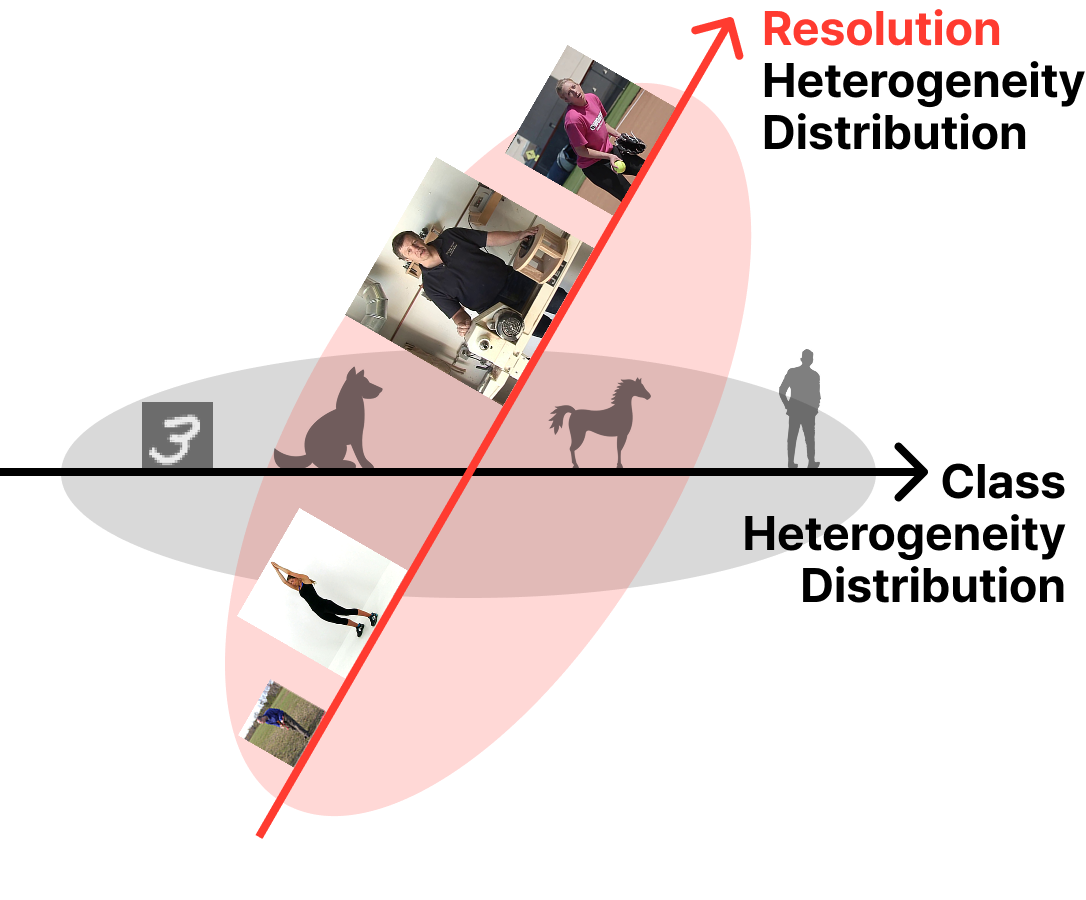}    
    \caption{Multiple axes defining statistical heterogeneity: the class axis represents the dataset's class distribution, and the resolution axis represents the distribution based on image resolutions.}
    \label{distribution-axis}
\end{figure}

Federated learning often suffers substantial performance degradation when client data distributions are heterogeneous.
FedProx~\cite{fedprox} and SCAFFOLD~\cite{scaffold} are representative methods for handling the statistical heterogeneity in FL.
However, these advances address non-IID data primarily in classification settings. 
Real-world applications often require capabilities beyond classification, such as object detection, human pose estimation, and depth estimation, client datasets vary regarding class labels and image resolution (see Figure~\ref{distribution-axis}).
Therefore, a pressing need arises to explore the applicability of FL to these non-classification tasks, an area that remains significantly under-investigated.
Specifically, high-resolution regression problems such as landmark or keypoint detection (e.g., Human Pose Estimation~\cite{simple_baseline, vitpose, deep-hrnet}) require fundamentally different network architectures, often adopting encoder-decoder structures. 
The decoder must recover the encoded features back to the input resolution, requiring the preservation of spatial information throughout the network. By contrast, classification models use an encoder-only architecture and discard most spatial details because the input image's spatial information need not be reflected in a single-class label.

In such high-resolution regression scenarios, all the participating clients are unlikely to possess data with the same resolution.
For example, the server aggregates the locally trained model weights derived from both low- and high-resolution images.
Multi-resolution aggregation is a fundamentally different issue to that which has been extensively studied in existing FL research on statistical heterogeneity~\cite{fedprox, scaffold}, where heterogeneity is often implicitly confined to class distribution skew.
A critical challenge for high-resolution regression in such settings is the feature representation mismatch caused by information loss or distortion owing to resolution differences, rendering these tasks highly sensitive to variations in data resolution.
As discussed in Section~\ref{sec:pre-analysis}, high-resolution regression tasks demand much deeper spatial features than classification. Therefore, heterogeneity along axes other than the class can cause severe performance drift.
Previous studies have noted multiple axes of non-IIDness~\cite{redifne_non-IID}; however, resolution heterogeneity remains under-investigated. This new distribution shift axis requires dedicated methods for mitigating its effects. Although there are other forms of distribution heterogeneity, this paper focuses specifically on resolution heterogeneity as a critical factor in high-resolution regression tasks.

To our knowledge, this multi-resolution issue has not been explicitly addressed in FL literature. We introduce the term \textbf{``resolution-drift''} to describe this performance degradation phenomenon. Resolution drift occurs when clients train on data with different resolutions, causing the global model to overfit to certain resolutions and lose its ability to generalize across others.
Existing FL methods designed to address data heterogeneity~\cite{fedprox, scaffold} fail to address the core issue, since they focus solely on the global aggregation step and overlook local resolution-induced overfitting.
For FL to be widely adopted in practical settings, its effectiveness must extend beyond class-level predictions to supporting pixel- or coordinate-level tasks, which are central to high-resolution regression problems such as analyzing CCTV footage collected from cameras with varying resolutions. Therefore, new FL methodologies must be developed, which can maintain a stable performance across diverse resolution inputs. Addressing this issue requires novel approaches beyond the conventional FL techniques.

This paper proposes \textbf{Resolution Adaptive Federated Learning (RAF)} to mitigate the resolution-drift problem for non-classification tasks. Our system architecture assumes that clients possess data with different resolutions (Figure~\ref{fig:fl-overview}).
To counteract resolution drift, RAF employs a heatmap-based Knowledge Distillation (KD) strategy, where a KD loss function minimizes the distance between outputs generated from higher-resolution inputs (teacher) and those from lower-resolution inputs (student).
By serving as soft targets, the teacher's output acts as a regularizer, preventing overfitting to any single resolution, thereby, enhancing robustness across multiple resolutions. This is consistent with the findings in the literature~\cite{hinton_kd, self_distillation, kd_lsr}.
However, designing a regularizer based on knowledge distillation across multiple resolutions using a transformer backbone introduces additional complications. The self-attention mechanism in transformers is inherently permutation-invariant; therefore, positional embeddings are required to inject spatial order information. Vision Transformer (ViT), our selected backbone relies on Absolute Positional Embedding (APE), whose shape is fixed by the input resolution and cannot adapt during training. Consequently, training ViT on multi-resolution inputs is complicated.
To address this limitation, we drew inspiration from recent work, ResFormer~\cite{resformer}, which replaces APE with convolution-based positional embeddings. In ResFormer, the convolutional kernels are learned, allowing positional embedding to adapt dynamically to different input resolutions and inject a smooth spatial context. 
Since ResFormer demonstrated this convolution-based positional embedding approach only in the context of classification tasks, its effectiveness for high-resolution regression remains unclear. 
Our experiments confirm that these embeddings are also effective for non-classification tasks, enabling robust feature extraction across varying resolutions.

\subsection*{Main Contributions}

Our main contributions are summarized as follows:
\begin{itemize}
    \item We identify the ``resolution-drift'' phenomenon in multi-resolution federated learning and formally define its impact on non-classification tasks such as high-resolution regression.
    \item We extend convolution-based positional embeddings to high-resolution regression, showing that they enable Vision Transformer backbones to train effectively on multi-resolution inputs.
    \item We present RAF, a novel framework that integrates multi-resolution knowledge distillation as a resolution-regularizer within standard FL, and provide a theoretical analysis of its convergence.
    \item Through extensive experiments, we demonstrate that RAF mitigates resolution-drift and allows single-resolution clients to benefit from a globally trained model that remains robust across all resolutions.
    \item RAF is modular and orthogonal to existing FL aggregation schemes (e.g., FedProx), facilitating easy integration into a wide range of FL pipelines.
\end{itemize}

\section{Related Works}

\subsection{Federated Learning (FL)}
Federated Learning is a distributed-learning paradigm that enables training across multiple clients without transferring their data to a central server, thereby preserving data privacy~\cite{fedavg}. FL has become an essential technology in edge computing and IoT environments because it facilitates collaborative learning while addressing privacy concerns.
Existing FL research has primarily focused on solving the challenges related to statistical heterogeneity and communication efficiency. FedProx~\cite{fedprox} has extended FedAvg by adding a proximal term to each client's objective, thereby enabling more stable convergence under non-IID data distributions in classification settings. SCAFFOLD~\cite{scaffold} further enhances this by introducing control variates to correct client drift during local updates, significantly reducing communication rounds, and improving the accuracy in distributed classification tasks. 
However, these studies~\cite{fedprox, scaffold} mainly concentrated on class-level heterogeneity and did not address resolution-level heterogeneity.
Unlike classification tasks, which predict class-level labels, high-resolution regression tasks require pixel-level label predictions, making them inherently more challenging.

In real-world applications, FL systems encounter devices with significantly different sensing capabilities. For example, in autonomous driving fleets, some vehicles are equipped with high-definition cameras, whereas others have low-resolution dashcams or infrared sensors. In precision agriculture, data may be collected using both high- and low-altitude drones, all of which produce images with different resolutions. Existing FL approaches~\cite{fedprox, scaffold, fednova, moon, feddyn} primarily address class-level heterogeneity but do not consider resolution-level heterogeneity. This limitation underscores the need for FL methodologies that can achieve robust learning in multi-resolution environments.

\subsection{Vision Transformer in Federating Learning}
\label{subsec:vit}

Transformer architectures, originally introduced by Vaswani et al.~\cite{transformer} and now ubiquitous across NLP~\cite{transformer, bert, gpt4, llama, lcm}, vision~\cite{vit, deit, dit, dino_v2}, audio~\cite{wav2vec, audiolm, ast}, and multimodal~\cite{clip, blip, llava} domains, offer key advantages for FL: their self-attention mechanism captures long-range dependencies, their modular design adapts easily to diverse tasks, and they scale gracefully with data size. Recent FL research has begun to leverage these strengths, achieving improved robustness and convergence with non-IID data~\cite{fl_transformer, fedyolo}.

We build on this trend by adopting a Vision Transformer (ViT)~\cite{vit} as the backbone of the proposed method. ViT~\cite{vit} processes images by splitting them into flattened patches. However, ViT relies on APE with its size fixed to a single training resolution, which hampers generalization to other resolutions and often requires interpolation at the inference stage~\cite{resformer}. To overcome this limitation, we replace APE with convolution-based positional embeddings from ResFormer~\cite{resformer}, which dynamically injects the spatial context across varying input sizes without explicit resizing. Although prior work has demonstrated this strategy primarily for classification, we extend it to a high-resolution regression task within a federated learning setting, demonstrating that it significantly improves human pose estimation (HPE) performance and resolution robustness.

\section{In-depth Analysis of Resolution Effects in Federated Learning}
\label{sec:pre-analysis}

Although extensive FL research has addressed the challenge of statistical heterogeneity in non-IID settings, most efforts remain limited to two key aspects.
\begin{itemize}
    \item Previous works have predominantly addressed classification tasks.
    \item Statistical heterogeneity is typically considered solely in terms of class distribution skew.
\end{itemize}
In this section, we discuss these limitations and underscore the need to address resolution heterogeneity in real-world FL scenarios.

\begin{figure}[t]
  \centering

  \begin{subfigure}[b]{\linewidth}
    \centering
    \includegraphics[width=\linewidth]{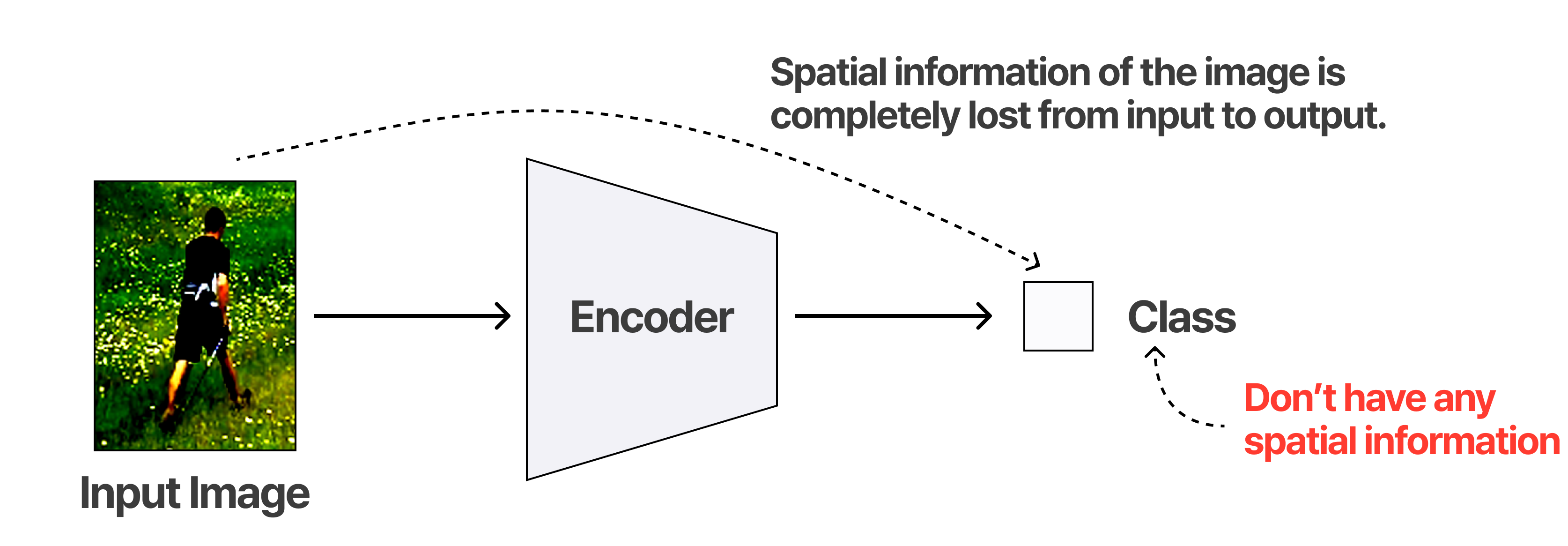}
    \subcaption{Classification}
    \label{fig:classification}
  \end{subfigure}

  \medskip

  \begin{subfigure}[b]{\linewidth}
    \centering
    \includegraphics[width=\linewidth]{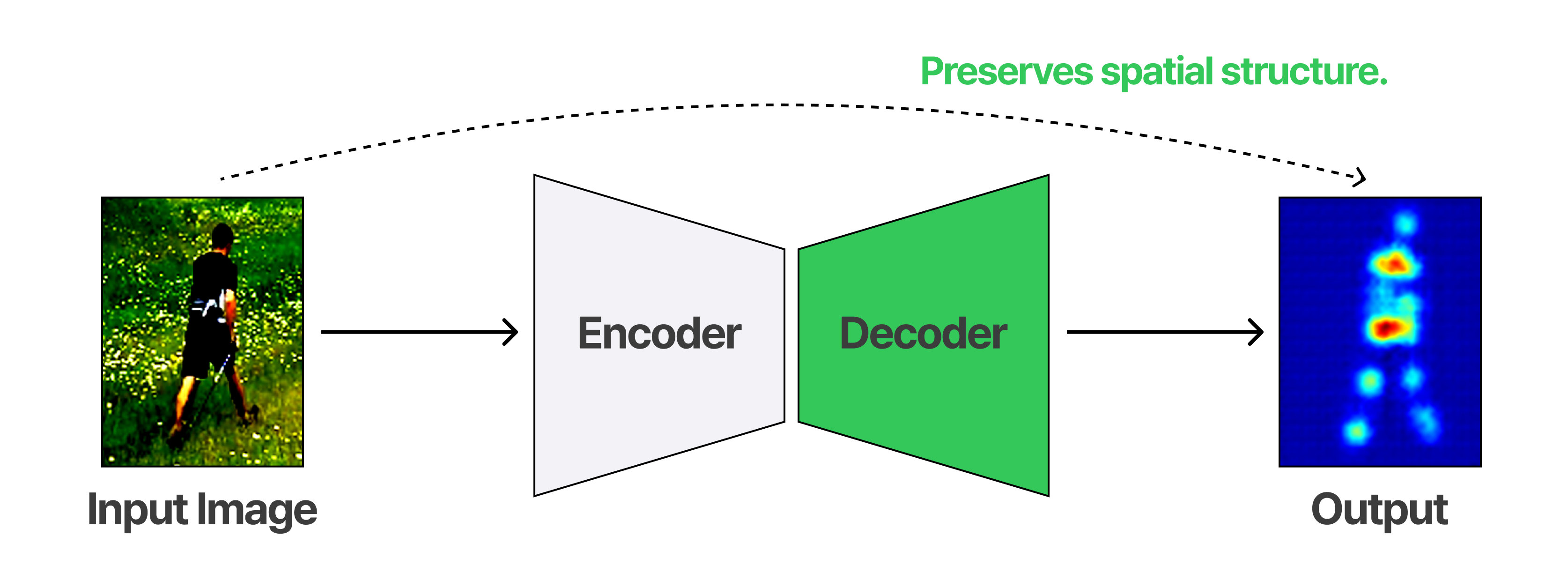}
    \subcaption{High-Resolution Regression}
    \label{fig:high-resolution regression}
  \end{subfigure}

  \caption{The architectural difference between classification and high-resolution regression}
  \label{fig:cls-hrr}
  \vspace{-15pt}
\end{figure}

\subsection{Fundamental Differences Between Classification and High-Resolution Regression}
\label{subsec:cls-hrr}

In real-world FL scenarios, classification tasks represent only a small fraction of use cases. The most fundamental vision problems involve high-resolution regression, including keypoint detection, depth estimation, and super-resolution. These tasks underpin many more real-world applications; however, they pose far greater challenges than classification and have received relatively little attention from FL researchers. This gap motivated us to focus on high-resolution regression in federated settings.

Figure~\ref{fig:cls-hrr} illustrates the architectural differences between the classification and high-resolution regression models. 
As shown in Figure~\ref{fig:classification}, classification corresponds to image-level prediction in which a single class label is predicted for each input image. In this setting, the predicted output does not contain any spatial information even though the input is a rich spatial map. Consequently, the network progressively shrinks the spatial dimensions of its feature maps and distills all relevant information into a single vector.
By contrast, high-resolution regression requires predictions that align with spatial locations in the original image, as illustrated in Figure~\ref{fig:high-resolution regression}. These high-resolution regression problems must preserve spatial detail throughout the network and produce output feature maps at or near the input resolution. Consequently, architectures for these tasks often adopt an encoder-decoder design, extract deep features, and then reconstruct or upsample them to full image size.

Because the output itself must retain a spatial structure, models for these tasks are inherently more sensitive to changes in the input resolution and demand richer feature representations than models trained solely for classification. In keypoint detection models, for instance, encoder-decoder architectures are designed with skip connections specifically to preserve high-resolution information; such spatial fidelity is critical for accurate boundary localization~\cite{hrnet-seg}.

\subsection{Experimental Analysis of Resolution Mismatch Effects}
\label{subsec:risk_of_res-hetero}

\begin{figure}[t]
    \centering
    \includegraphics[width=0.95\linewidth]{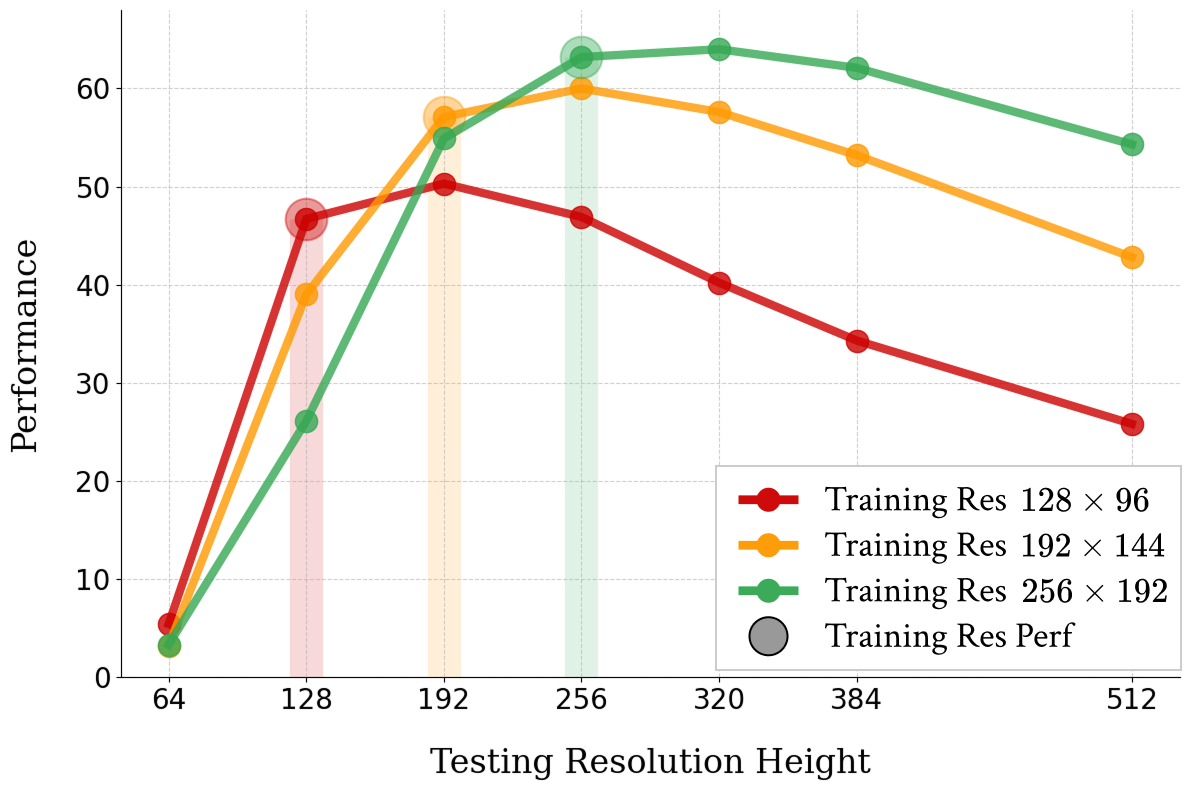}
    \caption{ For the Human Pose Estimation task in the centralized learning scenario, each model was identically trained on a single resolution across three clients and tested on various resolutions. ``Training Res Perf'' denotes the resolution used during model training. The x-axis shows only the height of the training resolution, and all displayed resolutions maintain a 4:3 height-to-width aspect ratio.}
    \label{fig:single_res_test}
    \vspace{-10pt}
\end{figure}

In realistic FL scenarios for high-resolution regression, clients inevitably possess images at different resolutions rather than at a single fixed size. As discussed in Section~\ref{sec:intro}, this variation introduces resolution heterogeneity, which, after model aggregation, can hinder FL by significantly degrading performance.

Figure~\ref{fig:single_res_test} shows that the ViT-based models suffer from critical limitations: 
We examined the performance variation when a ViT model, trained in a centralized manner at a single resolution, was tested at different resolutions.
The task was keypoint detection, a representative high-resolution regression task.
This involves estimating human keypoint locations from a single image and is commonly used for detailed human analysis. 
As shown in Figure~\ref{fig:single_res_test}, at its trained resolution, each model achieved peak performance, but its accuracy dropped sharply when tested on unseen resolutions. This indicates that the characteristics of the learned weights vary significantly depending on resolution.

\subsection{Resolution Drift: Performance Degradation in Federated Learning}
\label{subsec:resolution-drift}

As discussed in Section~\ref{subsec:risk_of_res-hetero}, resolution heterogeneity can introduce inconsistencies in the learned representations across clients who may possess data with different resolutions. These inconsistencies, when aggregated on the server, may significantly decrease the model's performance. To investigate this issue further, we conducted a controlled experiment to demonstrate explicitly how resolution heterogeneity can impair the effectiveness of federated learning.

\begin{table}[htbp]
\caption{Low-resolution ($128\times96$) test accuracy on the human pose estimation task for models trained on various resolution triplets in a federated learning scenario, illustrating the \textbf{resolution-drift} phenomenon. ``Res.'' denotes resolution.}
\label{tab:res_drift}
\centering
{
\begin{tabularx}{1.0\linewidth}{YYYY}
\toprule
\multicolumn{3}{c}{\textbf{Trained Resolution}} & \textbf{Test Res.}  \\ 
\cmidrule(lr){1-3}\cmidrule(l){4-4}
\textbf{Res.\,1} & \textbf{Res.\,2} & \textbf{Res.\,3} & $\mathbf{128\times96}$\\
\midrule
$\mathbf{128\times96}$ & $\mathbf{128\times96}$ & $\mathbf{128\times96}$ & \textbf{52.8} \\
$128\times96$           & $128\times96$           & $192\times144$       & 52.9 \\
$128\times96$           & $192\times144$          & $192\times144$       & 52.4 \\
$128\times96$           & $128\times96$           & $256\times192$       & 52.3 \\
$128\times96$           & $192\times144$          & $256\times192$       & 51.6 \\
$128\times96$           & $256\times192$          & $256\times192$       & 51.3 \\
\bottomrule
\end{tabularx}}
\end{table}

In particular, we simulated an FL setting in which each client was assigned a dataset with a distinct input resolution. Despite using the same model architecture and training procedures across all clients, the heterogeneity in resolution leads to diverging updates, rendering aggregation on the server more challenging. Table~\ref{tab:res_drift} presents the performance of the global models trained under resolution-diverse scenarios.

In Table~\ref{tab:res_drift}, the top row corresponds to all the clients using low-resolution data. Moving downward, the number of clients using high-resolution data increases.
Although higher-resolution data generally provide richer visual information and improve model learning, the inference performance on low-resolution inputs decreases as the training data diverge more from that resolution. Notably, only the configurations {$128\times96$, $128\times96$, $192\times144$} yielded a slight improvement, possibly because of the inclusion of more informative training samples.
The results in Table~\ref{tab:res_drift} reveal that federated models trained with heterogeneous resolutions consistently underperform compared to those trained in resolution-homogeneous settings. This performance drop, which we refer to as \textbf{ resolution drift }, highlights the tangible risk posed by resolution heterogeneity in FL environments. Such drift arises because the model is uncertain regarding to which resolution it should adapt, with degraded performance across all resolutions. This emphasizes that federated learning frameworks must explicitly account for resolution-related variability, and ensure robustness and generalization.

\section{Proposed Method: Resolution Adaptive Federated Learning (RAF)}
\label{sec:proposed_method}

\begin{figure*}[t]
    \centering    
    \includegraphics[width=0.95\textwidth]{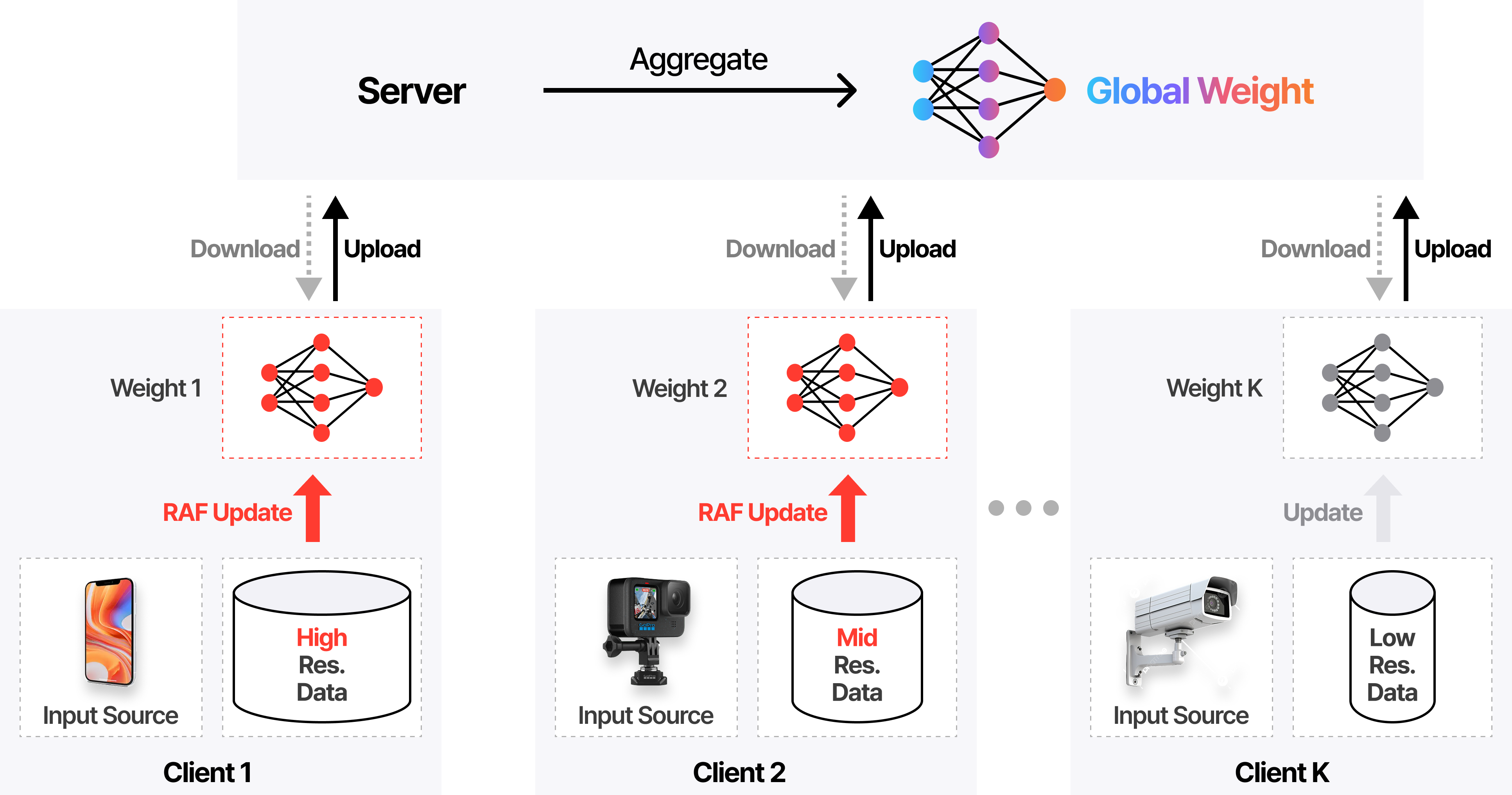}
    \caption{Overview of RAF when applied to clients with varying-resolution data. RAF fully exploits spatial information in multi-resolution images. ``Res.'' denotes resolution. RAF does not update at the lowest resolution, since further downsampling is impossible.}
    \label{fig:fl-overview}
    \vspace{-10pt}
\end{figure*}

Section~\ref{sec:pre-analysis} explains why federated learning in real-world settings must address high-resolution regression tasks and the resulting resolution-drift phenomenon is experimentally and visually demonstrated. When clients are trained at different image resolutions, their local updates overfit to those specific resolutions. Averaging these resolution-specific weights through FedAvg confuses the global model, causing its performance to decrease even though the input images contain rich spatial information. Consequently, clients have little incentive to participate. Why would they incur communication overhead if the aggregated model performs worse than the local training? To overcome this barrier, we propose a multi-resolution knowledge distillation framework that allows each client to leverage spatial features from its unique-resolution data and thereby effectively mitigate resolution-drift.

\subsection{Overview of RAF}

Figure~\ref{fig:fl-overview} shows a federated learning scenario in which multiple clients participate using data captured at different resolutions. As noted in Section~\ref{sec:intro}, it is highly unlikely that all clients in a real-world FL deployment will possess datasets with identical resolutions. For instance, images captured by a smartphone are often of high resolution, those captured by a small action camera are lower, and CCTV footage may be even lower.

To model this heterogeneity, we assume that each client $k$ holds a private dataset $D_{k}=\{(x_{k,j},T_{k,j})\}_{j=1}^{n_{k}}$
where \(x_{k,j}\in\mathbb{R}^{H_k\times W_k\times C}\) represents
\(j\)-th image recorded at the client's native resolution
\((H_k,W_k)\), and
\(T_{k,j}\) is a ground-truth heat-map defined on the same
pixel grid.
The overall training flow followed the standard FedAvg federated learning procedure.
Formally, let $N$ be the number of clients and suppose that the $k$-th client holds $n_{k}$ training samples $x_{k,j}$ (where $j=1,\;\dots,\;n_{k}$).
We seek a global objective model parameterized by $w$, which minimizes the aggregate objective.

\begin{align}
    \min_{w}\;\mathcal{L}(w) \;\triangleq\;\frac{1}{N}\sum_{k=1}^{N}\mathcal{L}_{k}(w).
\end{align}

In each communication round, the server broadcasts the current global weights $w$ to all clients. Each client subsequently performs local updates on its resolution-specific dataset \(\{D_{k},r_{k}\}\), where $r_{k}$ indicates the client's input resolution. Crucially, to prevent each client's model from overfitting to its own resolution, we augment the local training objective using a multi-resolution knowledge distillation term. This additional term encourages each client to incorporate spatial information from other resolutions, thereby counteracting the resolution drift.

After local training, each client uploads its updated weights to the server, which aggregates them using weighted averaging (FedAvg). The updated global model is then redistributed to all the clients, and the process is repeated until convergence. Detailed derivations of the local objective explicitly incorporating multi-resolution distillation are provided in Section~\ref{subsec:mrkd}.

\subsection{Maximally Utilizing Spatial Information via Multi-Resolution Knowledge Distillation}
\label{subsec:mrkd}

\begin{figure*}[t]
    \centering
    \includegraphics[width=0.95\textwidth]{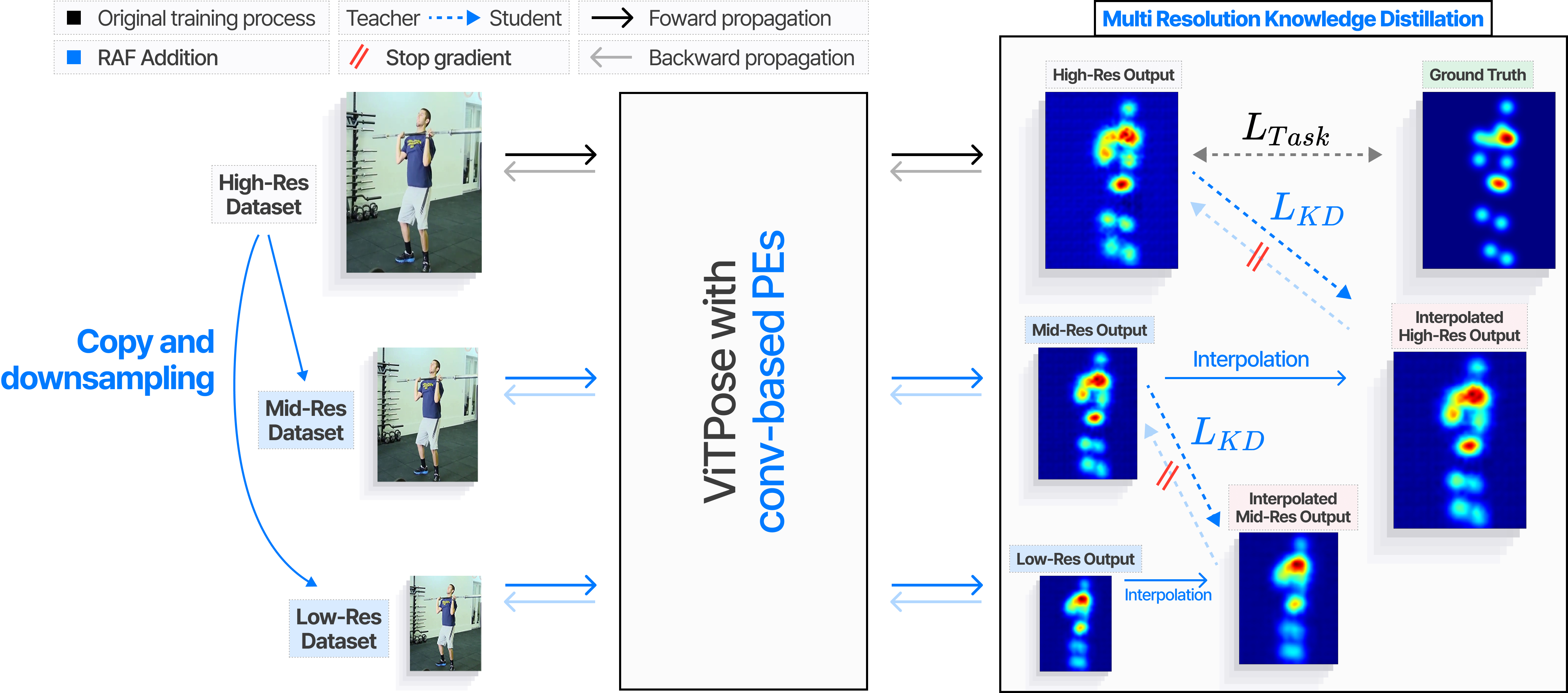}
    \caption{Detailed illustration of RAF local training on a high-resolution client. The client's original dataset is downsampled to create mid- and low-resolution inputs. The black arrows and boxes depict the standard ViTPose training flow, while the blue arrows and highlights indicate the additional RAF components. ``Stop gradient'' denotes that gradients are detached on the teacher branch during backpropagation for knowledge distillation.}
    \label{fig:mrkd}
    \vspace{-10pt}
\end{figure*}

In this subsection, we describe how our proposed model and training scheme effectively mitigate resolution drift. First, we outline the backbone architecture and its modifications to support multi-resolution inputs. We then introduce our Multi-Resolution Knowledge Distillation (MRKD) method and explain how it serves as a resolution-aware regularizer.

\subsubsection{Model Architecture}  
To leverage the strengths of transformer-based networks (as discussed in Section~\ref{subsec:vit}) and ensure compatibility with recent vision-transformer models, we adopt ViTPose~\cite{vitpose}, whose encoder follows the standard ViT architecture. Specifically, it consists of a patch-embedding layer (including positional embeddings), followed by several transformer blocks, each containing a multi-head self-attention layer and a feed-forward network layer.
Therefore, ViTPose relies on APE tied to a single resolution; it cannot accommodate multi-resolution training.

The ViT-based encoder requires a one-dimensional sequence of token embeddings rather than a two-dimensional image. 
The input 2D image $x\in\mathbb{R}^{H\times W\times C}$ is reshaped into a sequence of flattened 2D patches $x_{p}\in\mathbb{R}^{N_{p}\times(P^{2} C)}$, where $(H, W)$ is the resolution of the image, $(P,P)$ is the resolution of each image patch, $C$ is the number of channels, and $N_{p}=\frac{HW}{P^{2}}$ is the number of patches.
Next, a learnable linear projection $\mathbf{E}\in\mathbb{R}^{(P^{2} C)\times D}$ maps every patch to a $D$-dimensional latent vector to produce embedded patches $\mathbf{z}_{p}\in\mathbb{R}^{N_{p}\times D}$.
Then, a learnable positional embedding $\mathbf{E}_{\text{pos}}\in\mathbb{R}^{N_{p}\times D}$ is added to each token.

Note that the shape of $\mathbf{E}_{\text{pos}}$ depends directly on $N_{p}$, which, in turn, is determined by the input image resolution $(H,W)$. If the resolution changes, $N_{p}$ also changes, and the learned positional embedding $\mathbf{E}_{\text{pos}}$ must be reinitialized or resized. Consequently, a model trained with APE at one resolution cannot be directly generalized to another resolution, causing the accuracy to drop sharply on the unseen resolutions.
To prevent this and enable dynamic multi-resolution learning, we replace ViTPose's fixed APE with the convolution-based positional embeddings proposed in ResFormer~\cite{resformer}. First, we adopt a Global Positional Embedding (GPE) module immediately after the patch embedding layer. GPE employs a $3\times3$ depth-wise convolution to inject a smooth, global spatial context across the entire feature map. Next, within each multi-head self-attention block, we integrate a Local Positional Embedding (LPE) module, which also uses a $3\times3$ depthwise convolution, but focuses on capturing fine-grained local relationships. By substituting ViTPose's APE with GPE and LPE, our model can process inputs at arbitrary resolutions without modifying the network architecture.

\subsubsection{Multi-Resolution Knowledge Distillation}

\begin{algorithm}[htbp]
\caption{Resolution Adaptive Federated Learning (RAF)}
\label{alg:RAF}
\begin{algorithmic}[1]
\REQUIRE Number of rounds \(T\); number of local epochs \(E\); number of clients $N$; learning rate \(\eta\); distillation weight \(\alpha\); initial global model \(w_{0}\); each client $k$ holds private dataset $D_{k}$; number of local data samples available on that client $n_{k}$; total number of resolutions used in local training for $k$-th client $N_{k,\mathrm{res}}$
\FOR{round \(t = 0\) to \(T-1\)}
    \STATE Server broadcasts \(w_{t}\) to all clients
    \FOR{each client $k$ \textbf{in parallel}}        
        \STATE update local model parameter $w_{k}$ with $w_{t}$
        \STATE Let $D_{k}^{(i)}$ denote the dataset downsampled to the $i$-th resolution level, where larger $i$ corresponds to progressively lower resolutions.
        \FOR{$i=1$ to $N_{k,\mathrm{res}}-1$}
            \STATE Downsample entire $D_{k}^{(0)}$ to obtain $D_{k}^{(i)}$
        \ENDFOR
        \FOR{local epoch \(e=1\) to \(E\)}
            \FOR{each sample $(x_{k, j}^{(0)}, T_{k, j})\in D_{k}^{(0)}$}
                \STATE Select corresponding sample $x_{k, j}^{(i)}$ for $i=1,\;\dots,\;N_{k,\mathrm{res}}-1$
                \STATE \(\triangleright\) Compute task loss $\mathcal{L}_{k, \mathrm{task}}(w)$ as defined in Equation~\eqref{eq:loss_task}
                \STATE \(\triangleright\) Compute distillation loss $\mathcal{L}_{k, \mathrm{kd}}(w)$ as defined in Equation~\eqref{eq:loss_kd}
                \STATE \(\triangleright\) Compute total loss $\mathcal{L}_{k}(w)$ as defined in Equation~\eqref{eq:loss_total}
                \STATE \(\triangleright\) Backward and update
                \[
                  w_{k} \leftarrow w
                  - \eta\,\nabla_{w_{k}}\mathcal{L}_{k}(w)
                \]
            \ENDFOR
        \ENDFOR
        \STATE Client \(k\) sends \(w_{k}\) back to the server
    \ENDFOR
    \STATE Server aggregates
    \[
      w_{t+1} = \frac{1}{N}\sum_{k=1}^{N}w_{k}
    \]
\ENDFOR
\STATE \textbf{Output:} Final global model $w_{T}$
\end{algorithmic}
\end{algorithm}

Figure~\ref{fig:mrkd} illustrates the local training procedure for clients by using high-resolution data. Unlike standard ViTPose, which operates at a single fixed resolution, RAF requires multi-resolution inputs during training to achieve strong generalization across resolutions.

Let \(N_{k,\mathrm{res}}\) be the total number of
resolution levels employed when the client \(k\) undergoes local training.
For every original image
\(x_{k,j}\in\mathbb{R}^{H_{k}\times W_{k}\times C}\)
(\(j=1,\dots,n_{k}\)), we create
\(N_{k,\mathrm{res}}-1\) additional downsampled copies, denoted
\begin{align}
x_{k,j}^{(i)}\in\mathbb{R}^{H_{k}^{(i)}\times W_{k}^{(i)}\times C}\quad\bigl(i=0,\dots,N_{k,\mathrm{res}}-1\bigr),
\end{align}
where $x_{k,j}^{(0)}=x_{k,j}$ and, for $i>0$, $H_{k}^{(i)}<H_{k}^{(i-1)}$ and $W_{k}^{(i)}<W_{k}^{(i-1)}$ hold. Here, \(i\) is the resolution index (the larger the index, the lower the spatial resolution), and the subscripts \(k\) and \(j\) identify the client and the sample, respectively.
Each image \(x_{k,j}^{(i)}\) is obtained by interpolation of the native image \(x_{k,j}\) to the target size \(H_{k}^{(i)}\times W_{k}^{(i)}\).

After generating these $N_{k,\mathrm{res}}$ multi-resolution inputs, we pass each $x^{(i)}_{k,j}$ through model $\mathcal{M}$ parameterized by the local weight $w$ to obtain the corresponding heatmap output 
\begin{align}
    y^{(i)}_{k,j} \;=\; \mathcal{M}(x^{(i)}_{k,j};w).
\end{align}
First, we define a task loss $\mathcal{L}_{\text{task}}$ as the highest resolution output $y^{(0)}_{k,j}$. Following ViTPose~\cite{vitpose}, we use the mean squared error (MSE) against the ground-truth heatmap $T_{k, j}$:
\begin{align}
    \label{eq:loss_task}
    \mathcal{L}_{k,\text{task}}(w)
    = \frac{1}{n_k} \sum_{j=1}^{n_k} \left\| y^{(0)}_{k,j} - T_{k, j} \right\|_{2}^{2}.
\end{align}

To enforce \emph{scale-consistency} and encourage the model to extract rich spatial features from low-resolution inputs, we introduce a \emph{multi-resolution knowledge distillation} loss $\mathcal{L}_{\text{kd}}$. Specifically, we treat the heatmap output at resolution \(i-1\) as the ``teacher'' and the heatmap at resolution \(i\) as the ``student.'' By minimizing the MSE between these output pairs, we push the model to produce low-resolution heatmaps that resemble the higher-resolution ones. Formally,
\begin{align}
    \label{eq:loss_kd}
    \mathcal{L}_{k,\text{kd}}(w)
    = \sum_{i=1}^{N_{k,\mathrm{res}}-1} \frac{1}{n_k} \sum_{j=1}^{n_k} 
    \left\| \operatorname{sg}\left(y^{(i-1)}_{k,j}\right) - U_i^{i-1} y^{(i)}_{k,j} \right\|_2^2.
\end{align}
Note that during backpropagation, we detach the teacher output $\left(y^{(i-1)}_{k,j}\right)$ from the computational graph with $\operatorname{sg}$ operator (as in BYOL~\cite{byol}), which denotes the stop gradient, so that the gradients flow only through the student branch $y^{(i)}_{k,j}$. The matrix $U_i^{i-1} \in \mathbb{R}^{H_{k}^{(i-1)}W_{k}^{(i-1)} \times H_{k}^{(i)}W_{k}^{(i)}}$ is a fixed linear upsampling operator that maps low-resolution predictions at level $i$ to higher resolution $i-1$. 
Finally, the combined local objective for client $k$ is:
\begin{align}
    \label{eq:loss_total}
    \mathcal{L}_{k}(w)
    = \mathcal{L}_{k,\text{task}}(w) +\alpha \mathcal{L}_{k,\text{kd}}(w)+\gamma \mathcal{L}_{k,\text{reg}}(w),
\end{align}
where \(\alpha>0\) balances task accuracy and scale-consistency regularization, $\mathcal{L}_{k,\text{reg}}$ indicates an $\ell_{2}$ regularization, and $\gamma$ is its coefficient. 
Algorithm~\ref{alg:RAF} summarizes the RAF procedure. By replicating each original image $N_{k,\mathrm{res}} - 1$ times at progressively lower resolutions and applying knowledge distillation in a teacher-student hierarchy, RAF ensures that the model learns to generalize across all $N_{k,\mathrm{res}}$ resolutions, thus mitigating the resolution drift discussed earlier.

\subsubsection{MRKD as a Resolution Regularizer}
Our simple, yet effective MRKD approach acts as a resolution-aware regularizer. By aligning the teacher and student outputs across different resolutions, MRKD prevents the model from overfitting to any single resolution. This regularization effect improves generalization across all scales, thereby boosting the accuracy and robustness for both low and high unseen resolutions. 
In Section~\ref{subsec:low to high}, we show that MRKD-trained models can outperform their baselines even when evaluated on interpolated inputs at resolutions higher than those accessible during inference. This indicates that MRKD helps the network extract and utilize spatial information more effectively.

\subsection{Convergence Analysis}

This section establishes that the proposed RAF algorithm enjoys the same asymptotic convergence rate as standard \textsc{FedAvg}.
Our analysis concentrates on the \emph{late} phase of training, where a deep network is empirically observed, and theoretically justified~\cite{chizat2019lazy,lyu2020margin,papyan2020prevalence}, to behave like its linearisation.
Specifically, once the shared feature extractor has \emph{converged} (i.e., its weights evolve only marginally), the optimisation signal is
absorbed almost exclusively by the last affine layer.
We therefore regard the backbone up to the penultimate layer as a fixed feature map and study the remaining optimisation as that of a \emph{linear model with frozen features}, a viewpoint often called the \emph{lazy-training} or \emph{post-neural-collapse} regime.

Under the feature-converged assumption, we linearize the original network objective with respect to the final-layer weights; the resulting expected local loss for client \(k\) at round \(t\) is given by
{\small\begin{align}
&\mathcal{L}_{k}(w) 
= \frac{1}{n_k} \sum_{j=1}^{n_k} \left\| (\psi_{k,j}^{(0)})^\top w - T_{k, j} \right\|_2^2 \\
& + \alpha \sum_{i=1}^{N_{k,\mathrm{res}}-1} \frac{1}{n_k} \sum_{j=1}^{n_k} 
\left\| (\psi_{k,j}^{(i-1)})^\top w_t - U_i^{i-1} (\psi_{k,j}^{(i)})^\top w \right\|_2^2 \\
& + \frac{\gamma}{2} \|w\|_2^2
\end{align}}
Throughout this section we adopt the following concise notation.
For client \(k\) and sample \(j\) the expression \((\psi_{k,j}^{(i)})^{\top}w\)
(\(i=0,\dots,N_{k,\mathrm{res}}-1\)) denotes the heat-map obtained by projecting the feature vector
\(\psi_{k,j}^{(i)}\) with the weight of the last layer, $w$.
In late rounds, the backbone is assumed \emph{feature-converged}, so only the last-layer weight is updated.

This linearized objective includes both a supervised loss term for the highest-resolution predictions and a multi-resolution knowledge distillation loss that encourages consistency between consecutive resolutions through the fixed upsampling operators \(U_i^{i-1}\).
We now introduce an alternative formulation of the loss:
{\small\begin{align}
\bar{\mathcal L}_{k}(w)
=
\frac{1}{n_k}\sum_{j=1}^{n_k}&
        \bigl\|(\psi_{k,j}^{(0)})^{\top}w - T_{k,j}\bigr\|_{2}^{2}\\
+
\alpha
\sum_{i=1}^{N_{k,\mathrm{res}}-1}\!&
\frac{1}{n_k}\sum_{j=1}^{n_k}
        w^{\top}(\psi_{k,j}^{(i)})^{\top}(U_{i}^{i-1})^{\top}\nonumber\\
        &\times
        \Bigl[(\psi_{k,j}^{(i-1)})^{\top}
              - U_{i}^{i-1}(\psi_{k,j}^{(i)})^{\top}\Bigr]w \\
+
\frac{\gamma}{2}\,\|w\|_{2}^{2}&.
\end{align}}
Although \(\mathcal{L}_{k}(w)\) and \(\bar{\mathcal{L}}_{k}(w)\) differ globally, they are locally equivalent at the current iterate \(w = w_t\). Specifically,
\begin{align}
\mathcal{L}_{k}(w_{t}) &= \bar{\mathcal{L}}_{k}(w_t), \\
\nabla_w \mathcal{L}_{k}(w_{t}) &= \nabla_w \bar{\mathcal{L}}_{k}(w_t).
\end{align}
Therefore, gradient-based optimization of \(\mathcal{L}_{k}(w)\) in any neighbourhood of the current iterate \(w_t\) yields exactly the same update direction as would be obtained from \(\bar{\mathcal{L}}_{k}(w)\).  Both objectives thus attain the same critical point, and their local trajectories are indistinguishable.  Moreover, because the alternative loss \(\bar{\mathcal{L}}_{k}(w)\) is a quadratic form whose Hessian is augmented by the positive-definite matrix \(\tfrac{\gamma}{2}I_d\) with \(\gamma>0\), it is strictly convex; consequently, it possesses a unique and non-trivial minimiser \(w^{\star}\) rather than a degenerate solution.
Hence, the two formulations induce identical convergence behaviour, and \(\bar{\mathcal{L}}_{k}(w)\) can legitimately be used as a surrogate objective in theoretical analysis.

To analyze the convergence property, we first introduce the following assumptions.  
\begin{assumption}[Boundedness]\label{asm:bounded}
There exist positive constants \(M_{\phi}\), \(M_{U}\), and \(M_{T}\) such that, for any client $k$, for all resolution level \(i\) and every sample \(j\) pairs,
\begin{align}
  \|\psi_{k,j}^{(i)}\|_{2} \le M_\phi,\;\|U_{i}^{i-1}\|_{2}\leq M_{U},\;\|T_{k, j}\|_{2} \le M_T.
\end{align}
\end{assumption}
\begin{assumption}[Unbiased Stochastic Gradients]\label{asm:stoch-grad}
For any client \(k\) and parameter vector \(w\) with \(\|w\|_{2}\le R\),
let \(\xi\) denote a random index drawn uniformly from the local sample set.
The sample gradient
\(
\nabla \mathcal L_{k}(w;\xi)
\)
satisfies $\mathbb E_{\xi}\!\bigl[\nabla \mathcal L_{k}(w;\xi)\bigr]
= \nabla \mathcal L_{k}(w)$.
\end{assumption}
Note that Assumption \ref{asm:stoch-grad} is generally used in convergence analysis of federated learning \cite{fedavg_convergence}.
Under these conditions we shall prove three auxiliary propositions that bound (i) the gradient's Lipschitz constant, (ii) the objective's strong-convexity modulus, and (iii) the variance of stochastic gradients.
Taken together, these propositions deliver the smoothness, strong-convexity, and bounded-variance conditions required for the FedAvg convergence result of \cite{fedavg_convergence}.

\begin{proposition}[Smoothness of $\bar{\mathcal L}_{k}$]\label{prop:smooth}

Under Assumption~\ref{asm:bounded}, the gradient of $\bar{\mathcal L}_{k}(w)$ with respect to $W$ is $L$-Lipschitz with $L=2\bigl(1+\alpha(r-1)(M_{U}+1)^{2}\bigr)M_{\phi}^{2} + \gamma$.
\end{proposition}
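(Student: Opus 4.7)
The plan is to exploit the fact that $\bar{\mathcal{L}}_k$ is a quadratic function of $w$, so its gradient is affine and therefore globally Lipschitz with a constant equal to the spectral norm of the (constant) Hessian. I would begin by reading off each term's contribution to $\nabla^2\bar{\mathcal{L}}_k(w)$: the supervised term $\frac{1}{n_k}\sum_j\|(\psi_{k,j}^{(0)})^\top w - T_{k,j}\|_2^2$ contributes $\frac{2}{n_k}\sum_j \psi_{k,j}^{(0)}(\psi_{k,j}^{(0)})^\top$; each KD summand $\frac{\alpha}{n_k}\sum_j w^\top M_{k,j}^{(i)} w$ with $M_{k,j}^{(i)} := \psi_{k,j}^{(i)}(U_i^{i-1})^\top\bigl[(\psi_{k,j}^{(i-1)})^\top - U_i^{i-1}(\psi_{k,j}^{(i)})^\top\bigr]$ contributes $\frac{\alpha}{n_k}\sum_j \bigl(M_{k,j}^{(i)} + (M_{k,j}^{(i)})^\top\bigr)$ summed over $i=1,\dots,r-1$; and the $\ell_2$ regularizer contributes $\gamma I_d$.

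Next, I would bound each contribution's spectral norm using the triangle inequality, submultiplicativity of the operator norm, and Assumption~\ref{asm:bounded}. The supervised piece is bounded by $2M_\phi^2$, since $\|\psi_{k,j}^{(0)}(\psi_{k,j}^{(0)})^\top\|_2 \le M_\phi^2$. For each KD summand,
\[
\|M_{k,j}^{(i)}\|_2 \le M_\phi \cdot M_U \cdot \bigl(M_\phi + M_U M_\phi\bigr) = M_\phi^2 M_U(1+M_U) \le M_\phi^2(M_U+1)^2,
\]
so $\|M_{k,j}^{(i)} + (M_{k,j}^{(i)})^\top\|_2 \le 2M_\phi^2(M_U+1)^2$. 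Summing over the $r-1$ resolution pairs and multiplying by $\alpha$ yields a KD contribution of $2\alpha(r-1)(M_U+1)^2 M_\phi^2$. Combining with the $\gamma$ from the regularizer and applying the triangle inequality gives
\[
\|\nabla^2\bar{\mathcal{L}}_k(w)\|_2 \le 2M_\phi^2 + 2\alpha(r-1)(M_U+1)^2 M_\phi^2 + \gamma = L,
\]
from which the mean-value theorem applied to the affine gradient immediately yields $\|\nabla\bar{\mathcal{L}}_k(w_1) - \nabla\bar{\mathcal{L}}_k(w_2)\|_2 \le L\|w_1-w_2\|_2$.

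The main point requiring care, rather than a genuine obstacle, is that $M_{k,j}^{(i)}$ is in general asymmetric: only its symmetric part governs the quadratic form, so one must assemble the Hessian as $M + M^\top$ and then invoke $\|M + M^\top\|_2 \le 2\|M\|_2$ to keep the factor of two honest. The other cosmetic step is the mild loosening $M_U(1+M_U) \le (M_U+1)^2$, which the statement adopts for a cleaner-looking constant; a slightly sharper bound $L = 2M_\phi^2 + 2\alpha(r-1)M_U(1+M_U)M_\phi^2 + \gamma$ would also be available from the same computation if needed downstream.
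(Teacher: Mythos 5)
Your proof is correct and follows essentially the same route as the paper: the paper bounds the Lipschitz constant of each term's (affine) gradient directly, which for a quadratic objective is exactly your term-by-term spectral bound on the constant Hessian, and both yield the identical constant $L=2\bigl(1+\alpha(r-1)(M_{U}+1)^{2}\bigr)M_{\phi}^{2}+\gamma$ via the same estimates $\|A\|_2\le M_U M_\phi$, $\|A-B\|_2\le M_\phi(M_U+1)$, and the loosening $M_U(M_U+1)\le(M_U+1)^2$. Your explicit handling of the asymmetric KD quadratic form via $\|M+M^\top\|_2\le 2\|M\|_2$ is a slightly more careful account of where the factor of two comes from, but it is not a different argument.
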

\begin{proof}
We compute the gradient of each term in $\bar{\mathcal L}_{k}(w)$. 
Let us denote the gradient of the first term as
\begin{align}
\nabla_{w}\mathcal L_{k,\text{task}}(w)
    &=\frac{2}{n_k}\sum_{j=1}^{n_k}
      \psi_{k,j}^{(0)}\bigl((\psi_{k,j}^{(0)})^{\top}w - T_{k,j}\bigr).
\end{align}
This is a linear function of $w$, hence, its Lipschitz constant is computed as
\begin{align}
    L_{k,\text{task}} = \frac{2}{n_{k}} \sum_{j=1}^{n_{k}} \|\psi_{k,j}^{(0)}\|_{2}^2\leq 2M_{\phi}^{2}.
\end{align}

Next, let us find the Lipschitz constant of the gradient of knowledge distillation term.
Set $A_{ij}=U_{i}^{\,i-1}(\psi_{k,j}^{(i)})^{\top}$ and $B_{ij}=(\psi_{k,j}^{(i-1)})^{\top}$.
Then, the gradient of the knowledge distillation term is written as
\begin{align}
\nabla_{w}\mathcal L_{k,\mathrm{kd}}(w)
=\sum_{i=1}^{r-1}\frac{2}{n_k}\sum_{j=1}^{n_k}
      A_{ij}^{\top}\bigl(A_{ij}-B_{ij}\bigr)w.
\end{align}
The matrix norms satisfy $\|A_{ij}\|_{2}\le M_{U}M_{\phi}$ and
$\|B_{ij}\|_{2}\le M_{\phi}$, hence
$\|A_{ij}-B_{ij}\|_{2}\le M_{\phi}(M_{U}+1)$.  Consequently,
\begin{align}
&\bigl\|\nabla_{w}\mathcal L_{k,\mathrm{kd}}(w)
      -\nabla_{w}\mathcal L_{k,\mathrm{kd}}(v)\bigr\|_{2}
\\
&\le\frac{2}{n_k}\sum_{i=1}^{r-1}\sum_{j=1}^{n_k}
      \|A_{ij}^{\top}\|_{2}\,\|A_{ij}-B_{ij}\|_{2}\,\|w-v\|_{2}\notag\\
&\le 2(r-1)M_{U}M_{\phi}\,M_{\phi}(M_{U}+1)\|w-v\|_{2}.
\end{align}
The bound $M_{U}(M_{U}+1)\le (M_{U}+1)^{2}$ gives
\begin{align}
L_{k,\mathrm{kd}}\le 2(r-1)(M_{U}+1)^{2}M_{\phi}^{2}.
\end{align}
Finally, for the regularization term, one has
\begin{align}
\nabla_{w}\mathcal L_{k,\mathrm{reg}}(w)=\gamma w,
\qquad
L_{k,\mathrm{reg}}=\gamma.
\end{align}

Combining all Lipschitz constants, the global Lipschitz constant of $\bar{\mathcal L}_{k}$ is obtained as
\begin{align}
L_{k} &= L_{k,\mathrm{task}} + \alpha L_{k,\mathrm{kd}} + L_{k,\mathrm{reg}} \notag\\
  &\le 2M_{\phi}^{2} + \alpha\bigl[2(r-1)(M_{U}+1)^{2}M_{\phi}^{2}\bigr] + \gamma \notag\\
  &= 2\bigl(1+\alpha(r-1)(M_{U}+1)^{2}\bigr)M_{\phi}^{2} + \gamma.
\end{align}
This completes the proof of \(L\)-smoothness.
\end{proof}

\begin{proposition}[Strong convexity of $\bar{\mathcal L}_{k}$]\label{prop:strong}
Under Assumption \ref{asm:bounded}, $\bar{\mathcal L}_{k}$ is $\gamma$-strongly convex with respect to $w$.
\end{proposition}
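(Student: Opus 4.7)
The plan is to prove $\gamma$-strong convexity by establishing the Loewner bound $\nabla_w^2 \bar{\mathcal L}_k(w) \succeq \gamma I$ for every $w$. Since this is a second-order condition, I would compute the Hessian term by term, exactly as in the smoothness proof of Proposition~\ref{prop:smooth}, and argue that the three summands of $\bar{\mathcal L}_k$---the supervised task loss, the multi-resolution KD surrogate, and the $\ell_2$ regularizer---together dominate $\gamma I$.

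The first and third terms are routine. The task loss $\tfrac{1}{n_k}\sum_j\|(\psi_{k,j}^{(0)})^{\top}w - T_{k,j}\|_2^2$ is a sum of squared affine functions of $w$, so its Hessian $\tfrac{2}{n_k}\sum_j \psi_{k,j}^{(0)}(\psi_{k,j}^{(0)})^{\top}$ is a sum of rank-one positive semidefinite matrices, and the regularizer $\tfrac{\gamma}{2}\|w\|_2^2$ contributes exactly $\gamma I$. Boundedness (Assumption~\ref{asm:bounded}) is needed only to keep these matrices well defined; the PSD property is immediate from the sum-of-squares structure.

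The main obstacle is the alternative KD surrogate, whose quadratic form $w^{\top} M_{ij} w$ involves the matrix $M_{ij} = \psi_{k,j}^{(i)}(U_i^{i-1})^{\top}\bigl[(\psi_{k,j}^{(i-1)})^{\top} - U_i^{i-1}(\psi_{k,j}^{(i)})^{\top}\bigr]$. This matrix is in general non-symmetric, so its Hessian contribution is the symmetric part $M_{ij}+M_{ij}^{\top}$, and the natural concern is that this part could introduce negative curvature. Here I would lean on the surrogate's construction: $\bar{\mathcal L}_k$ was introduced precisely to locally reproduce the value and gradient of $\mathcal L_k$ at $w_t$ while being strictly convex, which is consistent with viewing the bracketed factor as evaluated at the frozen iterate (a first-order linearization of the original convex KD loss), so that this term is effectively affine in $w$ and its Hessian contribution vanishes.

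Combining the three Hessian contributions then yields $\nabla_w^2 \bar{\mathcal L}_k(w) \succeq 0 + 0 + \gamma I = \gamma I$, which is the defining inequality of $\gamma$-strong convexity. The remaining work is therefore almost entirely bookkeeping; the single conceptual step that must be nailed down rigorously is the Hessian of the KD surrogate, and any discrepancy between the linearization interpretation and the quadratic form as literally written would need to be resolved by explicitly verifying $M_{ij}+M_{ij}^{\top}\succeq 0$, which is the step I expect to be the most delicate.
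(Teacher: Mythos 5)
Your overall strategy is the same as the paper's: the proof in the text is a single line asserting $\nabla^2\bar{\mathcal L}_{k}(w)\succeq\gamma I$ because the regularizer contributes $\gamma I$, implicitly treating the remaining terms as having positive semidefinite Hessians. Your treatment of the task loss (sum of rank-one PSD matrices) and of the regularizer is exactly what is needed, and you have correctly isolated the only step that can fail: the curvature of the KD surrogate.

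However, neither of the two escape routes you offer for that step actually works, so the proposal does not close the argument. The ``frozen bracket'' reading, under which the KD surrogate is affine in $w$ with vanishing Hessian, contradicts both the literal definition of $\bar{\mathcal L}_{k}$ (both occurrences of $w$ in the quadratic form are the free variable) and Proposition~\ref{prop:smooth}, which computes a \emph{nonzero} Lipschitz constant $2(r-1)(M_U+1)^2M_\phi^2$ for $\nabla_w\mathcal L_{k,\mathrm{kd}}$ precisely because that gradient is linear (not constant) in $w$; so the term is genuinely quadratic. The fallback of verifying $M_{ij}+M_{ij}^{\top}\succeq 0$ then fails in general: writing $A_{ij}=U_i^{i-1}(\psi_{k,j}^{(i)})^{\top}$ and $B_{ij}=(\psi_{k,j}^{(i-1)})^{\top}$ as in Proposition~\ref{prop:smooth}, the symmetric part of $M_{ij}=A_{ij}^{\top}(B_{ij}-A_{ij})$ is $\tfrac12\bigl(A_{ij}^{\top}B_{ij}+B_{ij}^{\top}A_{ij}\bigr)-A_{ij}^{\top}A_{ij}$, which is already negative semidefinite (and nonzero) whenever $B_{ij}=0$ and $A_{ij}\neq 0$; Assumption~\ref{asm:bounded} only bounds norms and cannot exclude this. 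Consequently $\nabla^2\bar{\mathcal L}_{k}\succeq\gamma I$ does not follow for $\bar{\mathcal L}_{k}$ as written unless the indefinite KD contribution is separately dominated. In fairness, the paper's own proof has the same gap, since it never checks the sign of the KD quadratic form; the statement \emph{is} immediate for the original stop-gradient objective $\mathcal L_{k}$, whose KD term $\|(\psi_{k,j}^{(i-1)})^{\top}w_t-U_i^{i-1}(\psi_{k,j}^{(i)})^{\top}w\|_2^2$ is a squared affine function of $w$ and hence convex, and a fully rigorous version of the proposition would either argue on $\mathcal L_{k}$ or add a hypothesis forcing the surrogate's quadratic form to be PSD.
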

\begin{proof}
From the regularization term in $\bar{\mathcal L}_{k}$, $\nabla^2 \bar{\mathcal L}_{k}(W) \succeq \gamma I$ holds, and the function is $\gamma$-strongly convex.
\end{proof}

\begin{proposition}[Bounded Gradient Norm and Variance]\label{prop:grad-var}
Let Assumption~\ref{asm:bounded} hold and assume the current iterate satisfies
\(\|w_{t}\|_{2}\le R\).
Define the constant $C=\mathcal{O}\left(M_{\phi}M_{T} +(rM_{\phi}^{2}M_{U}^{2}+\gamma) R\right)$. Then,
\begin{align}
  &\|\nabla \bar{\mathcal L}_{k}(w_{t})\|_2 \le C,\\
  &\mathbb{E}_{\xi} \left[ \left\| \nabla \bar{\mathcal L}_{k}(w_{t};\xi) - \nabla \bar{\mathcal L}_{k}(w_{t}) \right\|_2 \right] \le C.
\end{align}
\end{proposition}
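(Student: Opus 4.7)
The plan is to reuse the closed-form gradients already computed in the proof of Proposition~\ref{prop:smooth} and bound each term separately using Assumption~\ref{asm:bounded} together with the hypothesis $\|w_t\|_2 \le R$. Both inequalities of the proposition follow from the same per-term estimates, so I would first establish a uniform bound on $\|\nabla\bar{\mathcal L}_k(w)\|_2$ for all $w$ with $\|w\|_2\le R$, and then derive the variance bound as an easy corollary.

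For the deterministic bound, I would split $\nabla\bar{\mathcal L}_k(w_t)$ into the task, distillation, and regularization components. Applying the triangle inequality to the task-loss gradient $\tfrac{2}{n_k}\sum_j \psi_{k,j}^{(0)}\bigl((\psi_{k,j}^{(0)})^{\top}w_t - T_{k,j}\bigr)$ and invoking $\|\psi_{k,j}^{(0)}\|_2\le M_\phi$, $\|T_{k,j}\|_2\le M_T$, $\|w_t\|_2\le R$ gives a bound of order $M_\phi(M_\phi R + M_T)$. For the distillation term, I would reuse the factorization $A_{ij}^{\top}(A_{ij}-B_{ij})w_t$ from Proposition~\ref{prop:smooth}, where $\|A_{ij}\|_2\le M_U M_\phi$ and $\|A_{ij}-B_{ij}\|_2\le M_\phi(M_U+1)$, yielding a bound of order $(r-1)M_\phi^2 M_U(M_U+1) R$ after summing over $i$. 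The regularization gradient contributes $\gamma R$. Collecting these terms and using $M_U(M_U+1)\le(M_U+1)^2 = \mathcal{O}(M_U^2)$ produces exactly the order $\mathcal{O}\bigl(M_\phi M_T + (rM_\phi^2 M_U^2 + \gamma)R\bigr)$ required by the statement.

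For the second inequality, Assumption~\ref{asm:stoch-grad} guarantees that the stochastic gradient is unbiased, so $\mathbb E_\xi\|\nabla\bar{\mathcal L}_k(w_t;\xi) - \nabla\bar{\mathcal L}_k(w_t)\|_2 \le \mathbb E_\xi\|\nabla\bar{\mathcal L}_k(w_t;\xi)\|_2 + \|\nabla\bar{\mathcal L}_k(w_t)\|_2$ by the triangle inequality (and Jensen's inequality for the centered version). Because the single-sample gradient $\nabla\bar{\mathcal L}_k(w_t;\xi)$ has the same structure as the full gradient but with the average over $j$ replaced by evaluation at a single index $\xi$, the exact same per-term arguments yield a bound of the same order $C$. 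Possibly absorbing a factor of~$2$ into the $\mathcal{O}(\cdot)$ notation then gives the stated variance inequality with the same constant $C$.

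The main obstacle is essentially bookkeeping rather than mathematics: one must ensure that each constant $M_\phi$, $M_U$, $M_T$, $r$, $\gamma$, and $R$ enters in exactly the dependence hidden inside the $\mathcal{O}$, and in particular that the distillation term, which couples $w_t$ through the product $A_{ij}^{\top}(A_{ij}-B_{ij})w_t$, is correctly bounded by $R$ rather than mistakenly treated as $T$-dependent. A subtle point is that the distillation surrogate $\bar{\mathcal L}_k$ contains no label $T_{k,j}$, so the $M_T$ dependence arises only from the task term, which is consistent with the stated $C$. Once these dependencies are tracked carefully, both inequalities follow directly and the three Propositions together supply the $L$-smoothness, $\gamma$-strong-convexity, and bounded-variance hypotheses needed to invoke the FedAvg convergence theorem of~\cite{fedavg_convergence}.
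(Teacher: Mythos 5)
Your proposal is correct and follows essentially the same route as the paper: the same three-way split into task, distillation, and regularization gradients, the same per-term bounds $2M_\phi(M_\phi R+M_T)$, $\mathcal{O}\bigl((r-1)M_\phi^2(M_U+1)^2R\bigr)$, and $\gamma R$, and the same Jensen/triangle argument (absorbing a factor of $2$) for the single-sample variance bound. No gaps to report.
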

\begin{proof}
We analyze the gradient of the local loss \( \bar{\mathcal L}_{k}(w) \), which consists of three terms: the task loss, the distillation loss, and the regularization term. Each will be bounded separately.

First, the gradient of the task loss is given by
\begin{align}
\nabla_{w}\mathcal L_{k,\text{task}}(w_t)
=
\frac{2}{n_k}\sum_{j=1}^{n_k}
\psi_{k,j}^{(0)}
\bigl((\psi_{k,j}^{(0)})^{\top}w_t - T_{k,j}\bigr).
\end{align}
Using the triangle and Cauchy-Schwarz inequalities, the norm can be bounded as
\begin{align}
&\bigl\|\nabla_{w}\mathcal L_{k,\text{task}}(w_t)\bigr\|_2
=
\Bigl\|
\frac{2}{n_k}\sum_{j=1}^{n_k}
        \psi_{k,j}^{(0)}
        \bigl((\psi_{k,j}^{(0)})^{\top}w_t - T_{k,j}\bigr)
\Bigr\|_{2}\\
&\le
\frac{2}{n_k}\sum_{j=1}^{n_k}
        \|\psi_{k,j}^{(0)}\|_{2}\,
        \bigl\|(\psi_{k,j}^{(0)})^{\top}w_t - T_{k,j}\bigr\|_{2}\\
&\le
\frac{2}{n_k}\sum_{j=1}^{n_k}
        \bigl(
          \|\psi_{k,j}^{(0)}\|_{2}^{2}\,\|w_t\|_{2}
          + \|\psi_{k,j}^{(0)}\|_{2}\,\|T_{k,j}\|_{2}
        \bigr)\\
&\le
2M_\phi\bigl(M_\phi R + M_T\bigr),
\end{align}
where we have used the bounds \( \|\psi_{k,j}^{(0)}\|_2 \le M_\phi \), \( \|T_{k, j}\|_2 \le M_T \), and \( \|w_{t}\|_2 \le R \).

Next, we bound the gradient of the distillation loss. Each individual term satisfies
\begin{align}
&\bigl\|
  \bigl(U_{i}^{i-1}\psi_{k,j}^{(i)\!\top}\bigr)^{\!\top}
  \bigl(
     U_{i}^{i-1}\psi_{k,j}^{(i)\!\top}w_{t}
     - \psi_{k,j}^{(i-1)\!\top}w_{t}
  \bigr)
\bigr\|_{2}\\
&\;\le\;
  \|\psi_{k,j}^{(i)}\|_{2}\,\|U_{i}^{i-1}\|_{2}\nonumber\\
  &\times  \Bigl(
     \|U_{i}^{i-1}\|_{2}\,\|\psi_{k,j}^{(i)}\|_{2}\,\|w_{t}\|_{2}
     + \|\psi_{k,j}^{(i-1)}\|_{2}\,\|w_{t}\|_{2}
  \Bigr) \\
&\;\le\;
  M_\phi M_U\bigl(M_U M_\phi R + M_\phi R\bigr) \\
&\;\le\;
  M_\phi^{2}(M_U+1)^{2}R.
\end{align}

Summing over all indices, the gradient norm of the distillation loss is bounded as
\begin{align}
\|\nabla_w \mathcal{L}_{k,\mathsf{kd}}(w)\|_2 
\le 2 (r - 1) M_\phi^2 (M_U + 1)^2 R.
\end{align}
Finally, the norm of the gradient of the regularization term is bounded by
$\|\gamma w\|_2 \le \gamma R$.
Combining all three bounds, the total gradient norm is upper-bounded by
\begin{align}
\|\nabla_w \bar{\mathcal{L}}_{k}(w)\|_2 
\le 2 M_\phi (M_\phi R + M_T) \\
+ 2\alpha(r - 1) M_\phi^2 (M_U + 1)^2 R + \gamma R.
\end{align}
This expression defines an upper bound constant, which satisfies
\begin{align}
\|\nabla_w \bar{\mathcal{L}}_{k}(w)\|_2 \le \mathcal{O}(M_\phi M_T + \alpha r M_\phi^2 M_U^2 R + \gamma R).
\end{align}

Finally, for the variance of the stochastic gradients, we note that each stochastic sample gradient \( \nabla \bar{\mathcal{L}}_{k}(w; \xi) \) is formed from a single summand and is thus bounded similarly to the full gradient. Applying Jensen's inequality, we obtain
\begin{align}
\mathbb{E}_\xi \left[ \left\| \nabla \bar{\mathcal{L}}_{k}(w;\xi) - \nabla \bar{\mathcal{L}}_{k}(w) \right\|_2 \right] \le 2 C,
\end{align}
which is also bounded by $C$ up to scalar scale.
This completes the proof.
\end{proof}

\begin{theorem}[Global Convergence of \textsc{raf}]\label{thm:raf}
Choose a stepsize sequence
\begin{align}
\eta_t \;=\;
\Theta\!\Bigl(
\frac{1}{\,\gamma\bigl(E+\alpha r M_U^{2}M_\phi^{2}/\gamma+t\bigr)}
\Bigr)\!,    
\end{align}
so that, once \(t\) exceeds the threshold  
\(E+\alpha r M_U^{2}M_\phi^{2}/\gamma\), the rule satisfies  
\(\eta_t=\Theta(1/(\gamma t))\).
Then after \(T\) communication rounds the FedAvg optimality gap obeys
\begin{align}
\mathbb{E}[\mathcal{L}(w_{T})] - \min_{w}\mathcal{L}(w)\leq \;
O\!(1/T).
\end{align}
\end{theorem}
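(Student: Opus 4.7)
The plan is to invoke the FedAvg convergence theorem of \cite{fedavg_convergence}, which delivers an $\mathcal{O}(1/T)$ optimality gap for objectives that are $\mu$-strongly convex, $L$-smooth, and whose stochastic gradients have bounded variance under Assumption \ref{asm:stoch-grad}. Propositions \ref{prop:smooth}, \ref{prop:strong}, and \ref{prop:grad-var} have already verified these three structural hypotheses for the surrogate $\bar{\mathcal{L}}_k$, so the work reduces to (i) transferring the guarantee from the surrogate back to the original $\mathcal{L}_k$, (ii) assembling the explicit constants, and (iii) checking that the stated stepsize matches the schedule required by \cite{fedavg_convergence}.

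First I would appeal to the local equivalence argument given just after the definition of $\bar{\mathcal{L}}_k$: because $\mathcal{L}_k(w_t)=\bar{\mathcal{L}}_k(w_t)$ and $\nabla\mathcal{L}_k(w_t)=\nabla\bar{\mathcal{L}}_k(w_t)$, every local SGD step executed by RAF on $\mathcal{L}_k$ coincides with the corresponding step taken on $\bar{\mathcal{L}}_k$. Consequently, the RAF iterate sequence is identical to the FedAvg iterate sequence generated on the surrogate, and any rate established for the latter transfers verbatim. Next I would substitute $\mu=\gamma$ from Proposition \ref{prop:strong}, $L=2(1+\alpha(r-1)(M_U+1)^2)M_\phi^2+\gamma$ from Proposition \ref{prop:smooth}, and the gradient/variance bound $C$ from Proposition \ref{prop:grad-var} into the master bound of \cite{fedavg_convergence}, obtaining
\begin{align}
\mathbb{E}[\mathcal{L}(w_T)] - \min_{w}\mathcal{L}(w) \;\le\; \frac{\kappa}{T+\tau}\Bigl(\frac{B}{\gamma}+2L\,\|w_{0}-w^{\star}\|_{2}^{2}\Bigr),
\end{align}
where $\tau=\Theta(E+L/\gamma)=\Theta\!\bigl(E+\alpha r M_U^{2}M_\phi^{2}/\gamma\bigr)$ and $B$ collects the variance and heterogeneity constants. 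The prescribed stepsize $\eta_{t}=\Theta\!\bigl(1/(\gamma(E+\alpha r M_U^{2}M_\phi^{2}/\gamma+t))\bigr)$ is exactly the schedule $2/(\mu(\tau+t))$ demanded by \cite{fedavg_convergence}, and for $t$ beyond the burn-in threshold it degenerates to $\eta_{t}=\Theta(1/(\gamma t))$, producing the promised $\mathcal{O}(1/T)$ rate.

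The main obstacle is bounding the \emph{client-drift} quantity $\Gamma=\min_{w}\mathcal{L}(w)-\tfrac{1}{N}\sum_{k}\min_{w}\bar{\mathcal{L}}_{k}(w)$ that enters $B$, because resolution drift makes the clients genuinely heterogeneous: each client carries distinct feature vectors $\psi_{k,j}^{(i)}$, a potentially different number of resolution levels $N_{k,\mathrm{res}}$, and different upsampling operators $U_{i}^{i-1}$. I would bound $\Gamma$ by expanding each $\bar{\mathcal{L}}_{k}$ around its minimizer using its quadratic structure together with the $L$-smooth, $\gamma$-strongly convex envelope, and absorb the products $M_\phi^{2}M_U^{2}$ into the same constant that governs $L$, so that $\Gamma$ only inflates the prefactor and not the rate.

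A secondary bookkeeping step is to certify that the iterates stay inside the ball $\|w_{t}\|_{2}\le R$ on which Proposition \ref{prop:grad-var} is posed. This follows because the $\ell_{2}$ regularizer $\tfrac{\gamma}{2}\|w\|_{2}^{2}$ inside $\bar{\mathcal{L}}_{k}$ contracts the iterate toward the origin at every step, while the diminishing stepsize schedule prevents the stochastic gradient noise from driving $\|w_{t}\|_{2}$ outside $R$; a standard inductive argument on $\|w_{t}\|_{2}$, using the upper bound $C$ on the stochastic gradient, closes this gap. Once these two items are in place, the conclusion of Theorem \ref{thm:raf} follows immediately from the cited FedAvg rate.
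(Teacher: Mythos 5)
Your proposal follows essentially the same route as the paper: verify smoothness, strong convexity, and bounded gradient/variance via Propositions~\ref{prop:smooth}--\ref{prop:grad-var}, then invoke the FedAvg convergence theorem of \cite{fedavg_convergence} with the resulting constants $\mu=\gamma$, $L$, and $C$, and the stepsize schedule $\eta_t=\Theta(1/(\gamma(\tau+t)))$. The paper's own proof is in fact only a one-line citation of those propositions, so your additional care with the client-heterogeneity term $\Gamma$ and the inductive argument keeping $\|w_{t}\|_{2}\le R$ (on which Proposition~\ref{prop:grad-var} is conditioned) fills in steps the paper leaves implicit rather than diverging from its approach.
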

\begin{proof}
Combining Propositions~\ref{prop:smooth} to \ref{prop:grad-var}, we identify the exact constants required by the
FedAvg analysis of \cite{fedavg_convergence}.
With these choices, all four technical conditions of
\cite{fedavg_convergence} are satisfied automatically.
See Propositions~\ref{prop:smooth} to~\ref{prop:grad-var}.
\end{proof}
Under the explicit constants derived for \textsc{RAF}, Theorem~\ref{thm:raf}
recovers the classical FedAvg rate.  
Because the multi-resolution distillation term is convex and
Lipschitz-smooth, the presence of heterogeneous resolutions does not
deteriorate the convergence guarantee, while it does enhance robustness
across multiple input scales.

\section{Experiments}
\label{sec:experiments}
\subsection{Experimental Setup}
We evaluated the robustness of the proposed algorithm under resolution drift using a representative high-resolution regression task named Human Pose Estimation (HPE)~\cite{simple_baseline}. This involves localizing keypoints corresponding to human body joints in a person-centric input image. It is formulated as a high-resolution heatmap regression problem, in which the model outputs a set of spatial heatmaps, with each channel corresponding to a specific joint, and encodes the likelihood of that joint appearing at each spatial location. The final joint coordinates are obtained by independently identifying the (x, y) position with the maximum value in each output channel, which results in a set of joint locations equal to the number of target joints.

The MPII dataset~\cite{mpii}, which is a standard benchmark for human pose estimation, was used in the experiments. This provides predefined training and validation datasets. In our federated learning setup, each client was configured to hold 4K images sampled separately from the training data, and by default, the trained global model was evaluated using 1.5K validation images unless otherwise specified in the experimental section.
As a baseline, all the clients used the same model architecture, following the ViTPose~\cite{vitpose} configuration with a small Vision Transformer (ViT-S) backbone. The models were aggregated using the simple FedAvg. Our proposed method builds on this baseline architecture by incorporating a multi-resolution distillation mechanism for each client model. 
In the loss function, we set weighting coefficients to $\alpha=1$, and $\gamma=0.01$.
The AdamW optimizer was employed to train the models with batch sizes of 32 for training and 64 for testing. The initial learning rate was set to $2.5 \times 10^{-4}$, and the other settings, such as the learning rate decay schedule and data augmentation protocol followed those described in ViTPose~\cite{vitpose}.

\begin{figure}[t]
    \centering    
    \includegraphics[width=0.94\linewidth]{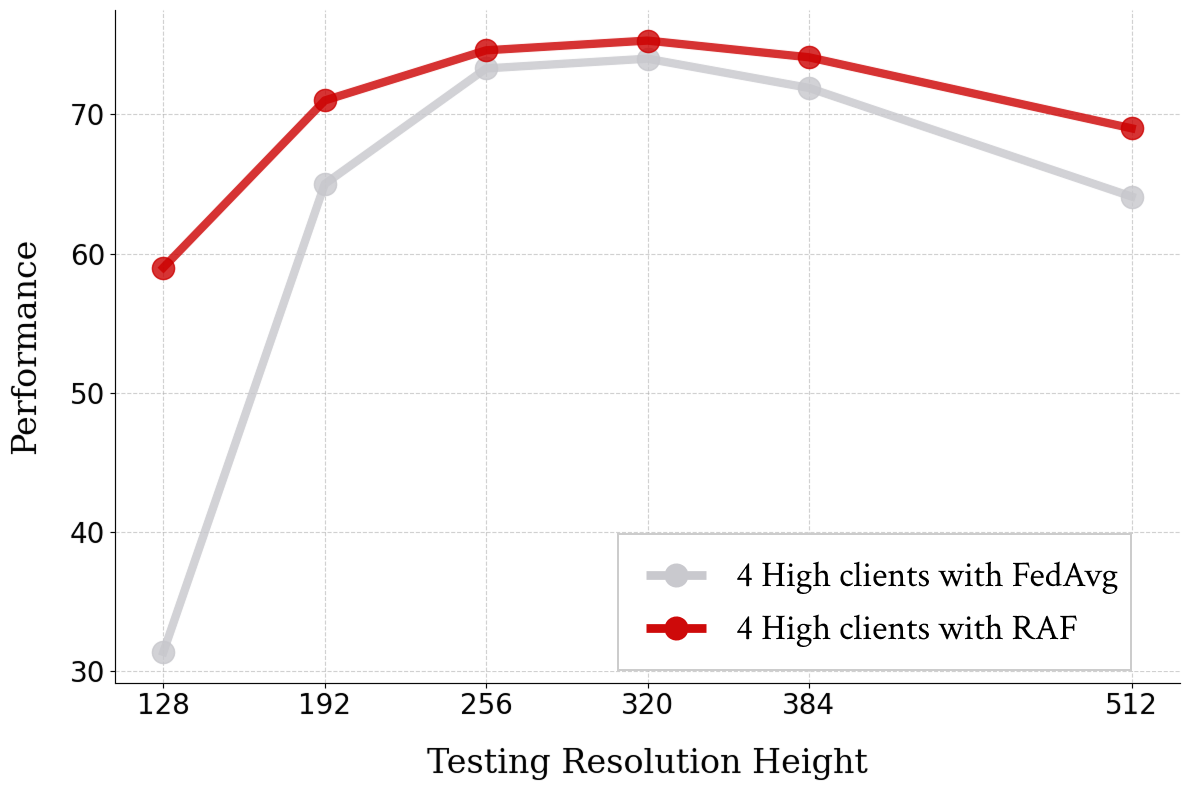}
    \caption{Inference accuracy on the human pose estimation task at various resolutions when four clients each hold only high-resolution ($256\times192$) datasets in a federated learning setting. The gray curve denotes the baseline using FedAvg aggregation, and the red curve denotes RAF. The x-axis indicates the height of each inference resolution; all resolutions maintain a 4:3 height-to-width aspect ratio.}
    \label{fig:h4-comp}
    \vspace{-15pt}
\end{figure}

\begin{figure}[t]
    \centering
    \begin{subfigure}[b]{0.32\linewidth}
        \includegraphics[width=\textwidth]{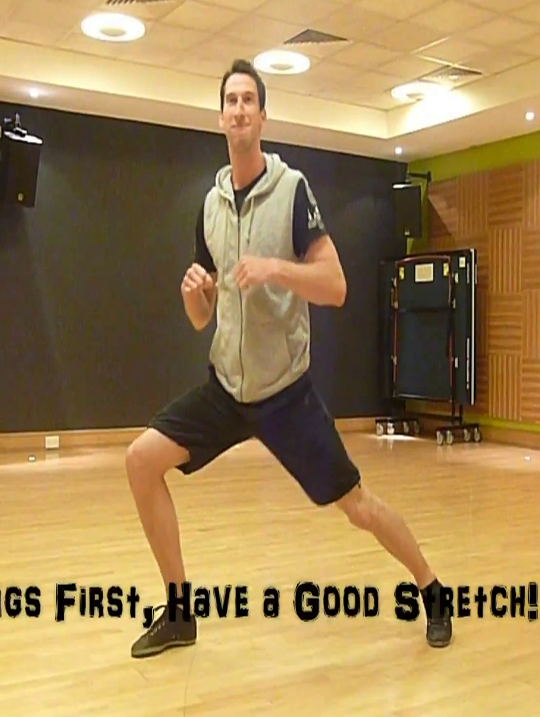}
        \caption{Input image}
        \label{fig:input}
        \end{subfigure}
    \hfill
    \begin{subfigure}[b]{0.32\linewidth}
        \includegraphics[width=\textwidth]{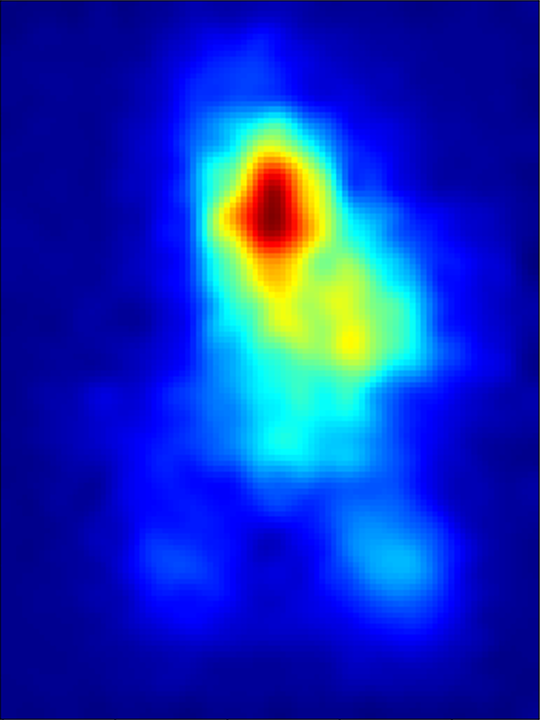}
        \caption{FedAvg}
        \label{fig:h4-standard}
        \end{subfigure}
    \hfill
    \begin{subfigure}[b]{0.32\linewidth}
        \includegraphics[width=\textwidth]{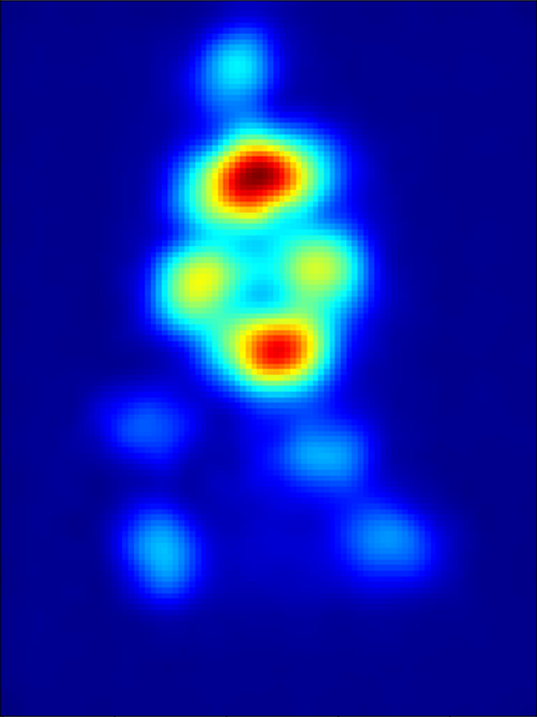}
        \caption{RAF}
        \label{fig:h4-raf}
        \end{subfigure}    
    \caption{Heatmap inference comparison on low-resolution ($128\times 96$) images. Figure~\ref{fig:input} is the input; Figure~\ref{fig:h4-standard} visualizes the heatmap inferred using weights obtained via FedAvg only from four high-resolution clients.; Figure~\ref{fig:h4-raf} is a visualization of the heatmap inferred using weights obtained via RAF from four high-resolution clients.}
    \label{fig:heatmap_comp}
    \vspace{-15pt}
\end{figure}

\subsection{Evaluation of Generalization Beyond Training Resolutions}
In this section, we demonstrate that our RAF method effectively mitigates the intrinsic limitations of ViT-based models, as discussed in Section~\ref{subsec:risk_of_res-hetero} and~\ref{subsec:resolution-drift}. 

Figure~\ref{fig:h4-comp} illustrates an FL scenario in which four clients are trained exclusively on high-resolution ($256\times192$) data. To assess robustness against resolution drift, we compared our proposed RAF method with the FedAvg-only baseline by evaluating both models across a spectrum of inference resolutions.
As shown, even when all client training data comprised only a single high resolution, RAF achieved substantially better accuracy at both lower and higher inference resolutions. Specifically, the baseline model suffered severe performance degradation to $128 \times 96$ and $192 \times 144$, whereas RAF yielded improvements of 27.6\% and 6.0\%.
These results indicate that RAF enables the model to experience multiple resolutions beyond the input resolution of each client by leveraging knowledge distillation with lower-resolution inputs. This afforded a balanced performance across all tested resolutions, including those not seen during training, even when all client input resolutions were the same.

Although Figure~\ref{fig:h4-comp} highlights the strength of our method, we reinforce this insight using visualized inference heatmaps. Figure~\ref{fig:heatmap_comp} compares the heatmaps produced by the FedAvg baseline trained on the four high-resolution clients with those from our RAF-trained model. At low resolution, the baseline model's heatmaps blur together, making it difficult to localize individual joints, whereas the RAF model clearly delineates each joint even under the same downsampling, demonstrating its superior robustness.

\subsection{Evaluation of the Regularization Effects of RAF}
\label{subsec:raf_comparison}
\begin{table*}[t]
\centering
\caption{Comparison of federated learning performance among three clients holding datasets with resolutions of $256\times192$, $192\times144$, and $128\times96$. Base and RAF variants are distinguished by whether FedAvg or FedProx is used for aggregation. Red numbers indicate the improvement of ``RAF (FedAvg)'' over ``Base (FedAvg)'', and blue numbers indicate the improvement of ``RAF (FedProx)'' over ``Base (FedProx)''.}
{\label{tab:RAF_comparison}
\begin{tabularx}{0.9\linewidth}{>{\raggedright\arraybackslash}p{0.2\linewidth}YYYYY}
\toprule
\textbf{Inference Resolution} & \textbf{Base (FedAvg)} & \textbf{Base (FedProx)} & \textbf{RAF (FedAvg)} & \textbf{RAF (FedProx)} \\
\midrule
$128\times96$ (Seen)       & 51.8  & 51.8 & \textbf{57.2 }\textcolor{red}{(+5.4)} & \textbf{57.1 }\textcolor{blue}{(+5.3)}\\
$192\times144$ (Seen)       & 64.6  & 65.0 & \textbf{67.1 }\textcolor{red}{(+3.5)} & \textbf{67.0 }\textcolor{blue}{(+2.0)}\\
$256\times192$ (Seen)       & 68.5  & 68.5 & \textbf{69.5 }\textcolor{red}{(+1.0)} & \textbf{69.7 }\textcolor{blue}{(+1.2)}\\
$320\times240$ (Unseen)     & 69.0  & 68.7 & \textbf{69.9 }\textcolor{red}{(+0.9)} & \textbf{70.1 }\textcolor{blue}{(+1.4)}\\
$384\times288$ (Unseen)     & 67.5  & 66.6 & \textbf{69.1 }\textcolor{red}{(+1.6)} & \textbf{69.0 }\textcolor{blue}{(+2.4)}\\
$512\times384$ (Unseen)     & 60.7  & 59.5 & \textbf{64.6 }\textcolor{red}{(+3.9)} & \textbf{64.4 }\textcolor{blue}{(+4.9)}\\
\bottomrule
\end{tabularx}}
\end{table*}

We evaluated RAF under resolution-drift conditions by comparing four configurations: Base (FedAvg), Base (FedProx), RAF (FedAvg), and RAF (FedProx). In each case, the three clients held high-($256\times192$), medium-($192\times144$), and low ($128\times96$) resolution data. For the RAF settings, we applied our multi-resolution distillation to each client's local model, leaving all other aspects identical to the corresponding baseline. This design isolates RAF's impact on mitigating resolution drift. The results are summarized in Table~\ref{tab:RAF_comparison}.

\begin{figure*}[b]
  \centering
  \begin{subfigure}[b]{0.31\textwidth}
    \includegraphics[width=\textwidth]{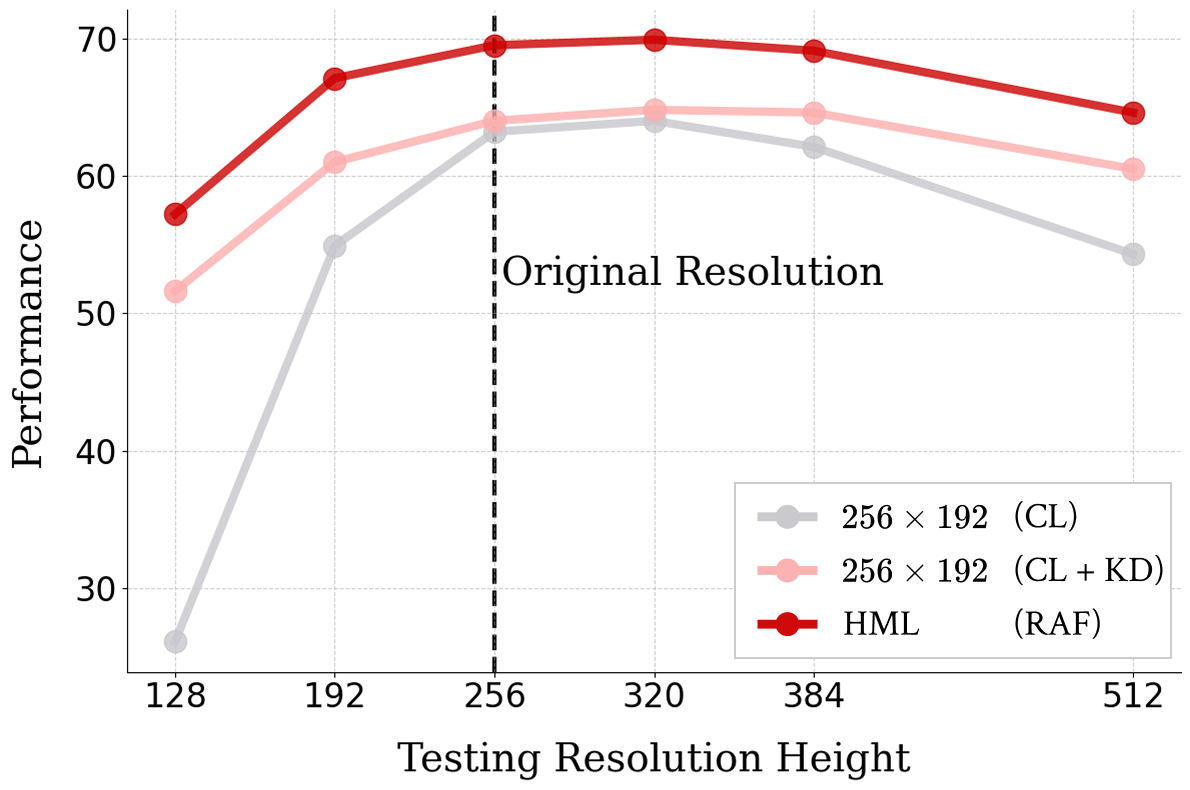}
    \caption{High Client Benefit}
    \label{fig:high-bene}
  \end{subfigure}
  \hspace{0.02\textwidth}
  \begin{subfigure}[b]{0.31\textwidth}
    \includegraphics[width=\textwidth]{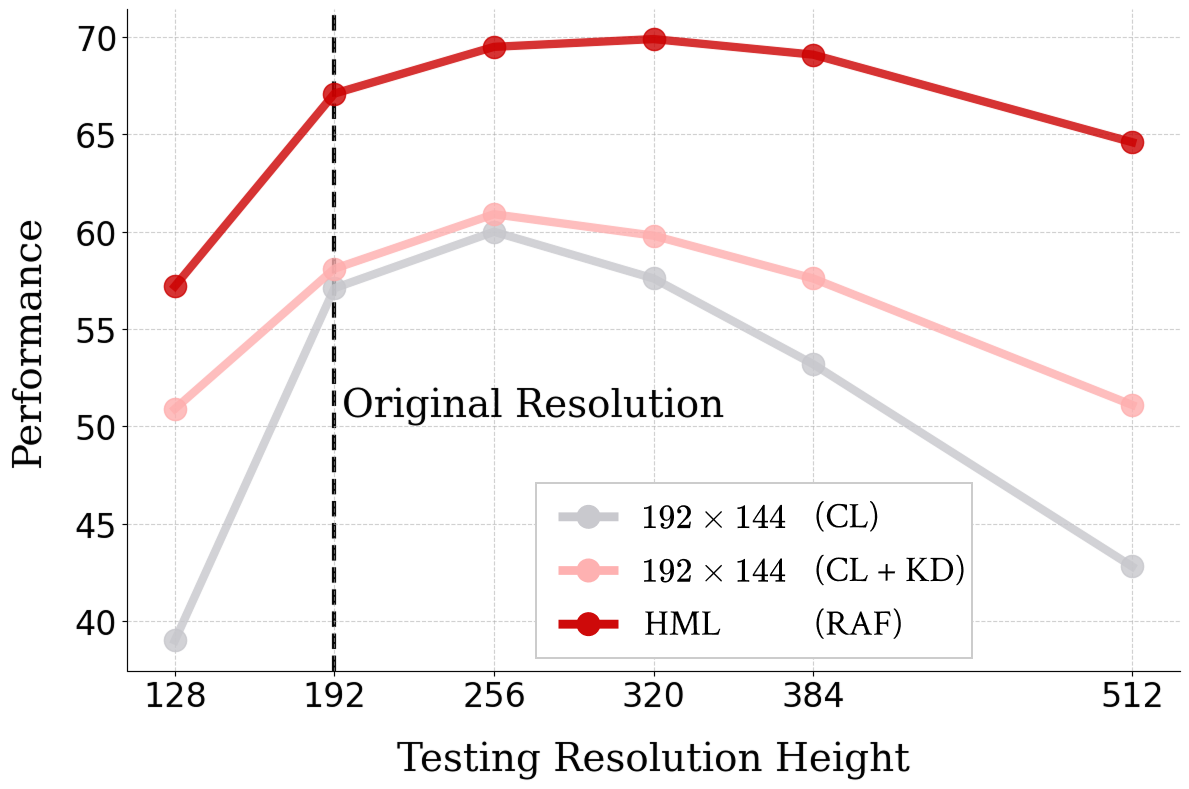}
    \caption{Mid Client Benefit}
    \label{fig:mid-bene}
  \end{subfigure}
  \hspace{0.02\textwidth}
  \begin{subfigure}[b]{0.31\textwidth}
    \includegraphics[width=\textwidth]{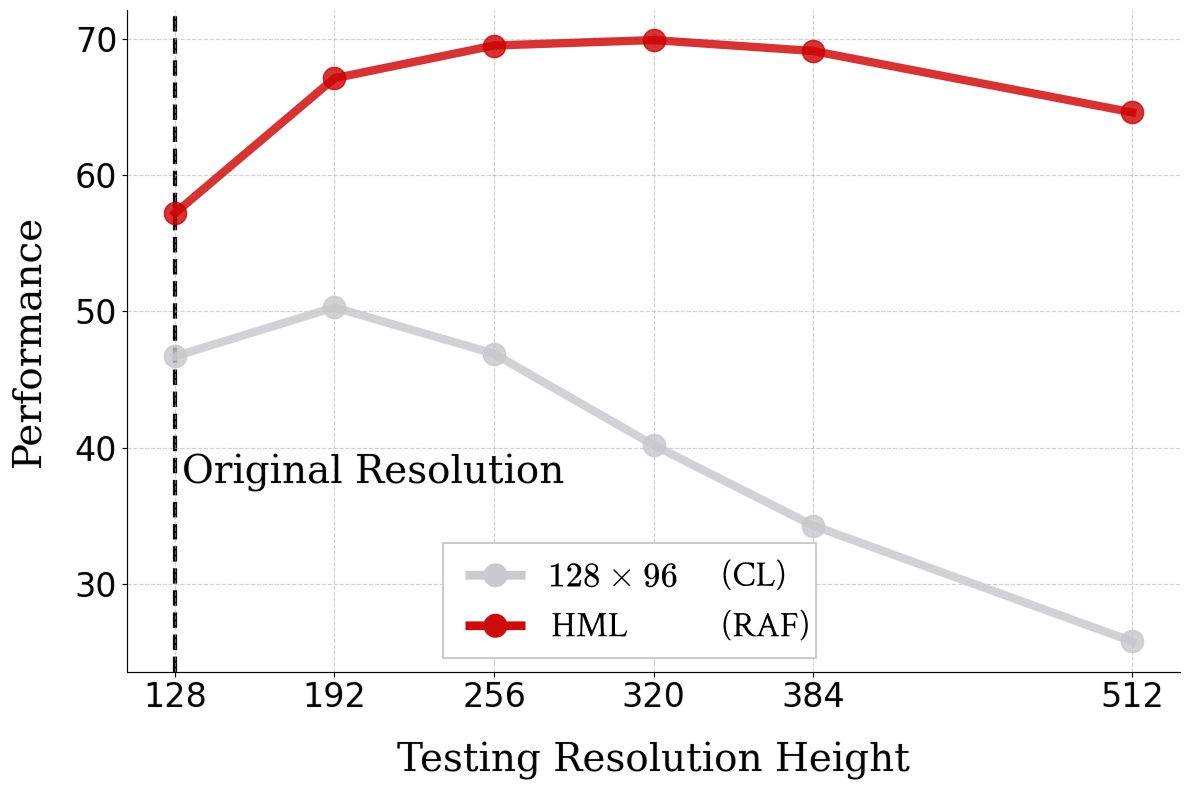}
    \caption{Low Client Benefit}
    \label{fig:low-bene}
  \end{subfigure}  
  \caption{Inference accuracy on the HPE task across various input resolutions for different training regimes. ``CL'' denotes centralized learning on a single client, ``CL+KD'' augments centralized training with our multi-resolution knowledge distillation, and both use 4,000 MPII samples. ``HML (RAF)'' denotes federated learning with three clients (high $256\times192$, mid $192\times144$, low $128\times96$), each holding 4,000 MPII samples (12,000 in total), using our RAF method. The x-axis indicates the input height, with all resolutions maintaining a 4:3 aspect ratio.}
  \label{fig:benefits-fl}
\end{figure*}

First, by comparing Base (FedAvg) with Base (FedProx), we observed similar overall performance. FedProx provides a modest $+0.4$ gain at the $192\times 144$ training resolution but underperforms FedAvg at unseen scales. While FedProx effectively addresses statistical heterogeneity in classification, it fails to mitigate resolution drift and can even introduce instability.
By contrast, both RAF (FedAvg) and RAF (FedProx) consistently outperform their baseline counterparts across all resolutions, with particularly large gains at both very low and very high scales. Without any additional data, RAF yields substantial gains over both aggregation schemes: RAF (FedAvg) improves the accuracy by $5.4\%$ at $128\times96$ and $3.9\%$ at $512\times384$, whereas RAF (FedProx) achieves $5.3\%$ and $4.9\%$ improvements at the same resolutions. Our multi-resolution distillation mechanism functions as an effective regularizer by minimizing the gap between the high- and low-resolution predictions. This prevents overfitting to any single resolution and enables RAF to generalize across a broad spectrum of input scales. These results confirm that RAF's enhanced accuracy is primarily driven by the regularizing effect of multi-resolution knowledge distillation, which specifically targets and alleviates the impact of resolution heterogeneity. Furthermore, RAF's compatibility with both FedAvg and FedProx underscores its versatility and ease of integration into existing FL frameworks.

\subsection{Analysis of Client-Side Benefits in RAF}

Although RAF serves as an effective regularizer, it introduces a modest computational overhead. A client with a high-resolution dataset has little incentive to participate in FL if the performance gains are negligible. To establish the practical benefits of combining FL with our RAF method, we conducted experiments to compare each client's performance under three settings: (1) Centralized Learning (CL) on their own data, (2) CL with our knowledge distillation (KD) method, and (3) FL with KD (RAF).

Figure~\ref{fig:high-bene}, \ref{fig:mid-bene}, and \ref{fig:low-bene} illustrate the performance benefits obtained by clients holding images at resolutions $256\times192$, $192\times144$, and $128\times96$, respectively, when participating in the proposed RAF framework.
In the high-resolution Figure~\ref{fig:high-bene} and mid-resolution Figure~\ref{fig:mid-bene}, comparing ``CL'' and ``CL+KD'' shows that applying KD in a centralized setting improves accuracy at every tested resolution and prevents severe performance degradation on unseen scales. Furthermore, when comparing ``CL+KD'' with ``RAF,'' we see similar performance gains at all resolutions. This similarity indicates that the improvement is not merely due to having more samples (as in FL), but rather due to our multi-resolution KD method, enabling each client to exploit fully the spatial information in its images. Consequently, even in the federated setting, where the sample number increases naturally, the model's accuracy increases proportionally because KD has already maximized the spatial feature utilization.
In the low-resolution Figure~\ref{fig:low-bene}, there is no ``CL+KD'' curve because no lower-resolution dataset is available for distillation. The gap between ``CL'' and ``RAF'' becomes dramatic due to the benefits of both diverse data and resolution-aware training. In particular, as inference resolution increases, the performance advantage of ``RAF'' over CL grows steadily.

These experimental results demonstrate that regardless of the client-image resolution, our KD method helps extract richer spatial information from those images. Consequently, each client gains a clear performance advantage by participating in FL using the proposed method, justifying the modest computational overhead. This broadens the FL method's applicability to heterogeneous resolutions.

\subsection{Can Clients with Only Low-Resolution Inference Still Benefit from RAF?}\label{subsec:low to high}

In the previous experiments, we demonstrated that federated learning with the proposed multi-resolution KD method achieved high accuracy across a variety of resolutions. 
However, extracting the best possible performance from data that each client actually holds differs from using a model that remains robust across all resolutions.

To benefit maximally from the performance improvements observed in earlier experiments, a client trained on low-resolution data must have access to high-resolution images at the time of inference. However, in practice, clients who cannot provide high-resolution data during training typically cannot obtain high-resolution images during inference. For example, in a scenario such as CCTV surveillance, illustrated in the upper row of Figure~\ref{fig:interpolation-inference}, we present the standard inference process for a low-resolution image. In such cases, where only low-resolution images are available for inference, the predicted keypoint locations remain somewhat blurry because of limited resolution. Consequently, when examining the low-resolution performance at $128\times96$ in Table~\ref{tab:interpolation-inference}, even a model trained using our method cannot exceed the inherent accuracy limit of $57.2$ imposed by the input resolution. 
Otherwise expressed, notwithstanding the model's power obtained through FL, its performance is ultimately constrained by the resolution of the data available for inference.

To address this limitation, we conducted additional experiments aimed at boosting performance when only low-resolution data were available. The lower row in Figure~\ref{fig:interpolation-inference} illustrates the proposed approach. 
To obtain a crisper output, we first interpolated the original low-resolution image to a higher resolution (e.g., $256\times192$) and then ran an inference on this interpolated image. The lower row of Figure~\ref{fig:interpolation-inference} shows that the high-resolution heatmap produced hereby clearly pinpoints keypoint locations compared to the directly obtained low-resolution heatmap.
Table~\ref{tab:interpolation-inference} presents the quantitative results for the various interpolation methods and resolutions.
When interpolating to $256\times192$, the model achieved \textbf{69.0}, \textbf{69.2}, and \textbf{69.7} for Bilinear, Area, and Bicubic, respectively. Interpolating to $320\times240$ yielded similarly high scores. These accuracies far exceed the standalone centralized learning performance of $46.7$ and post-FL low-resolution accuracy of $57.2$.
This demonstrates that interpolation combined with RAF can achieve significantly improved accuracy compared with an otherwise suboptimal low-resolution dataset.

\begin{table}[t]
\centering
\caption{Comparison of inference accuracy at low resolution ($128\times96$) using different interpolation methods. The model was trained using RAF across three clients holding high- ($256\times192$), mid- ($192\times144$), and low- ($128\times96$) resolution data. The ``Interpolated Res'' column indicates the post-interpolation resolution, ``$*$'' denotes the original non-interpolated resolution. The results are shown for three interpolation methods: Bilinear, Area, and Bicubic.}
\label{tab:interpolation-inference}
{\begin{tabularx}{1.0\linewidth}{>{\raggedright\arraybackslash}p{0.3\linewidth}YYYY}
\toprule
\textbf{Interpolated Res} & \textbf{Bilnear} & \textbf{Area} & \textbf{Bicubic} \\
\midrule
$128\times96$ ($*$)   & 57.2              & 57.2              & 57.2 \\
$192\times144$   & 66.7              & 66.8              & 66.9 \\
$256\times192$   & \textbf{69.0}     & 69.2              & \textbf{69.7} \\
$320\times240$   & \textbf{69.0}     & \textbf{69.4}     & 69.6 \\
$384\times288$   & 68.2              & 68.8              & 69.0 \\
$512\times384$   & 63.1              & 64.5              & 64.1 \\
\bottomrule
\end{tabularx}}
\end{table}

\begin{figure}[t]
    \centering    
    \includegraphics[width=0.93\linewidth]{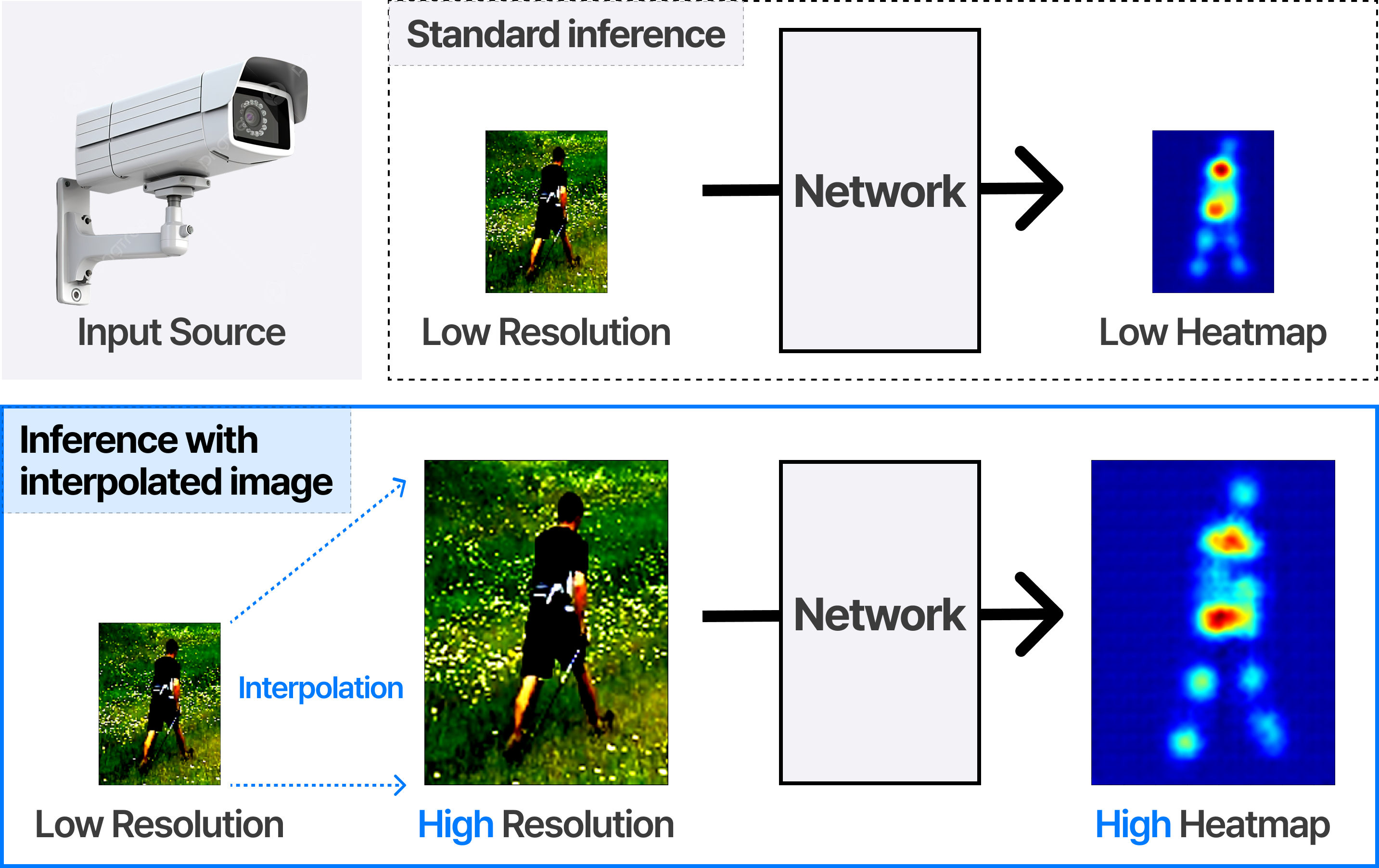}
    \caption{Inference with an interpolated image. The top row shows standard inference, while the bottom row shows inference on a low-resolution image after increasing its resolution via interpolation for improved performance.}
    \label{fig:interpolation-inference}
    \vspace{-15pt}
\end{figure}

\subsection{Scalability of RAF: Benefit to a Single High-Resolution Client with Low-Resolution Clients}

While a federated learning setup with only low-resolution clients is admittedly unrealistic, we conducted this extreme experiment to determine how far our method can ``rescue'' a high-resolution client's performance even under the worst conditions. Figure~\ref{fig:H1L11} illustrates a scenario in which one high-resolution client is joined by an increasing number of low-resolution clients (from 0 to 11). All clients were trained on $1,000$ MPII images. We report the low-($128\times96$) and high- ($256\times192$) resolution inference performances as the number of low-resolution clients increases. When there are zero low-resolution clients, the high-resolution client simply trains alone in a centralized fashion. Starting from one low-resolution client, we switched to FL, with RAF applied to each client.

\begin{figure}[t]
    \centering
    \includegraphics[width=0.93\linewidth]{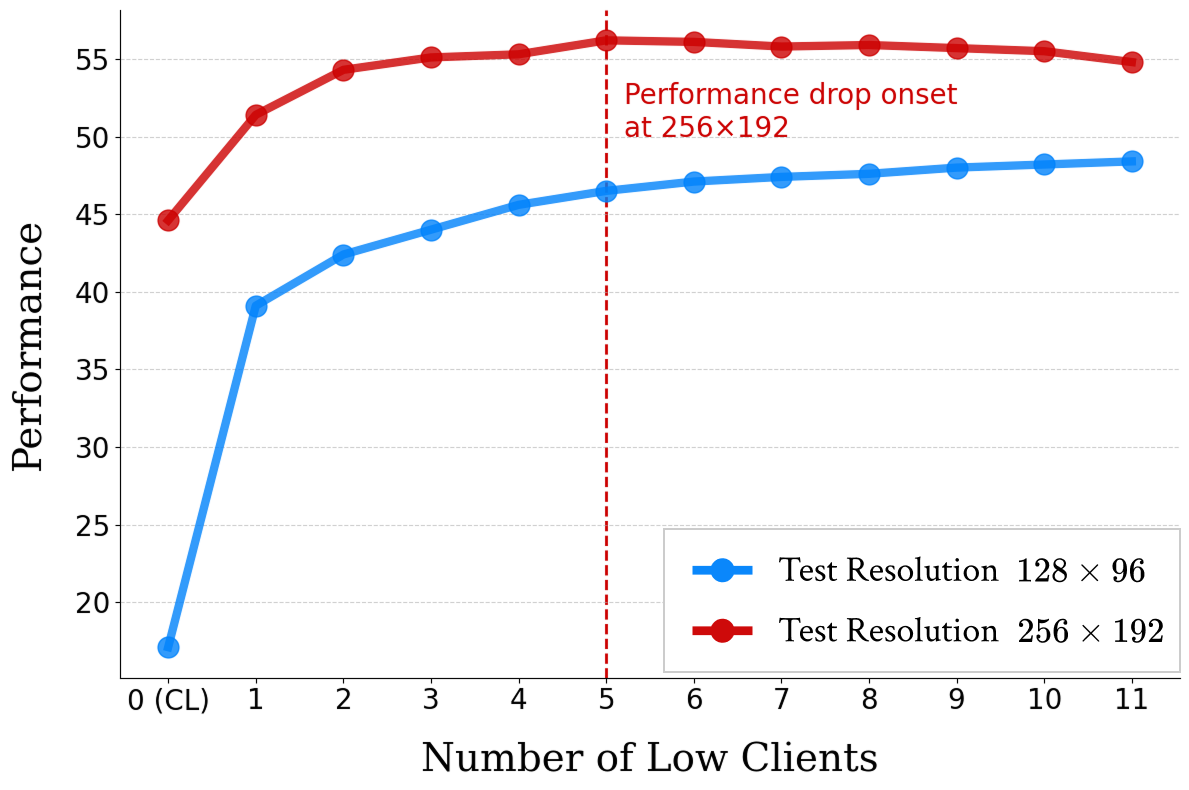}
    \caption{RAF with one high-resolution ($256\times192$) client and multiple low-resolution ($128\times96$) clients. In this setup, each client has $1,000$ samples for training.}
    \label{fig:H1L11}
    \vspace{-15pt}
\end{figure}

The low-resolution ($128\times96$) plot in Figure~\ref{fig:H1L11} shows that every FL configuration (one or more low-resolution clients) significantly outperformed the centralized baseline (zero low-resolution clients) at a low resolution. Recall from Table~\ref{tab:res_drift} that standard FL without the proposed multi-resolution KD actually degrades the performance when aggregating dissimilar resolutions, such as high and low, despite the greater spatial detail in higher-resolution inputs. In contrast, RAF shows that even when only low-resolution images are available during inference, the model learns to leverage every bit of spatial information, demonstrating the effectiveness of our KD scheme. Moreover, as more low-resolution clients were added, the low-resolution performance steadily improved. This implies that a purely low-resolution FL setup still yields benefits for each low-resolution participant, guaranteeing that they gain by joining, even though a high-resolution client is alone among many low-resolution peers.

In the high-resolution ($256\times192$) plot, we observed that all FL variants (with one or more low-resolution clients) outperformed the centralized high-resolution baseline. As we increased the number of low-resolution clients to five, the high-resolution performance continued to increase, peaking when five low-resolution participants were present. Beyond the five low-resolution clients, the performance began to dip slightly. We interpret this as follows: for up to five low-resolution clients, our KD mechanism's resolution-generalization effect applied to a single high-resolution client dominates and continues to improve the performance. However, when there are more than five low-resolution clients, the KD module cannot fully absorb the vast increase in low-resolution data, leading to a slight overfitting of low-resolution features, and consequently, a minor drop in high-resolution accuracy.
In summary, our experiments demonstrate that a single high-resolution client augmented with our KD method can be generalized across up to five low-resolution peers without overfitting. Even when faced with a much larger number of low-resolution clients, the high-resolution client still achieved a much higher performance than it would when training alone in a centralized setting. Therefore, regardless of the adverse effects in the FL scenario, a high-resolution client can guarantee a performance gain by adopting the proposed method.

\begin{figure}[htbp]
    \centering
    \includegraphics[width=0.95\linewidth]{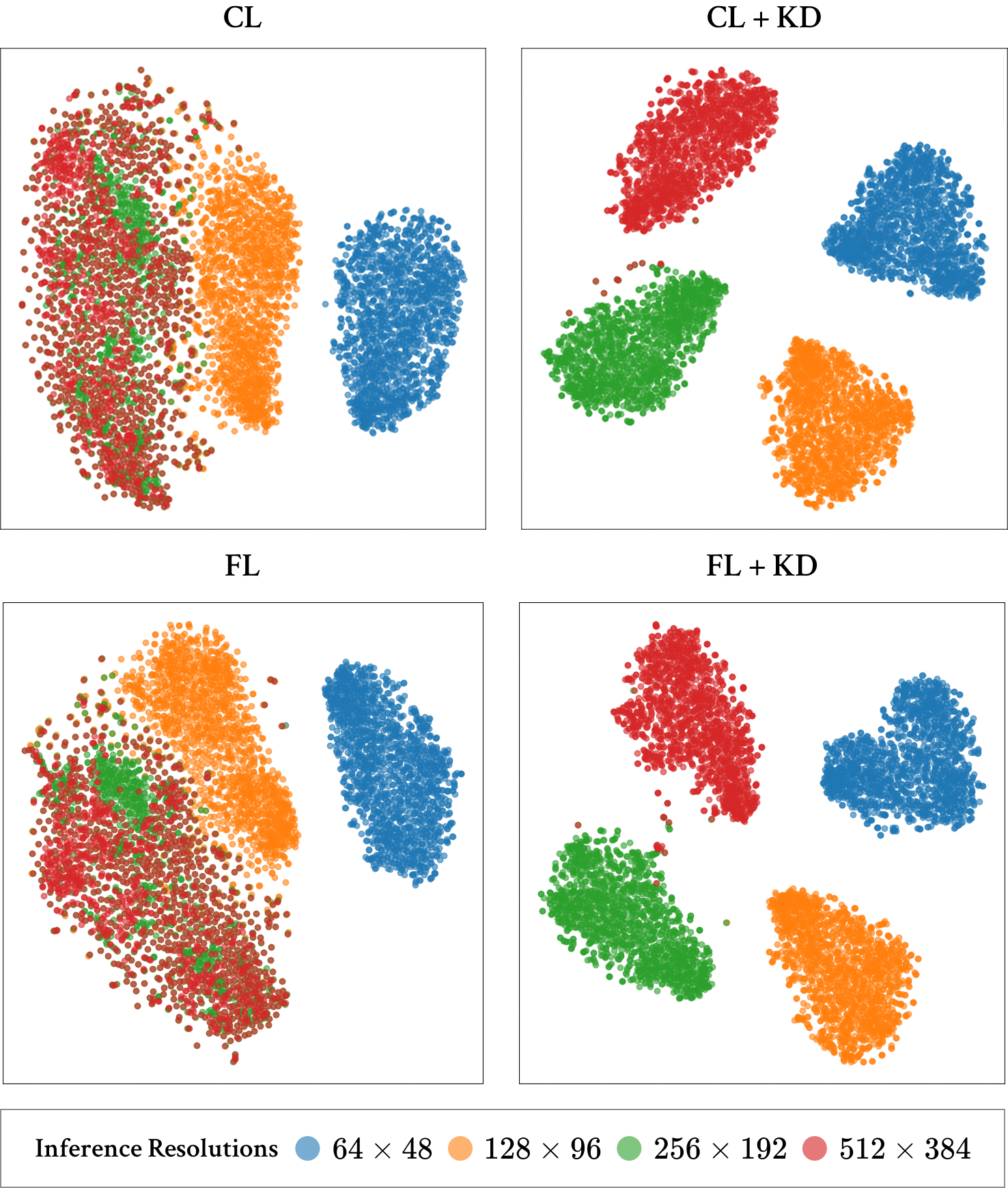}
    \caption{t-SNE visualization of feature embeddings (from the last ViT block) colored by inference resolution. ``CL'' denotes centralized learning on a single high-resolution client ($256\times192$); ``FL'' denotes federated learning with three clients holding high ($256\times192$), mid ($192\times144$), and low ($128\times96$) resolution data. ``CL+KD'' and ``FL+KD'' apply our multi-resolution knowledge distillation to the CL and FL setups, respectively.}
    \label{fig:t-sne-comp}
    \vspace{-10pt}
\end{figure}

\subsection{t-SNE Analysis: Resolution Robustness via RAF}

\paragraph{Anlaysis}
Figure~\ref{fig:t-sne-comp} shows an intuitive t-SNE analysis that demonstrates the resolution robustness of the models trained using RAF. Without multi-resolution KD, the model struggles to distinguish between the inputs at $256\times192$ and $512\times384$. Although this can separate some of the other scales, it mistakes the largest unseen resolution, $(512\times 384)$. In contrast, models trained with the proposed multi-resolution KD formed four clearly distinct clusters corresponding to each resolution, both in the centralized and FL settings. This striking separation indicates that RAF both enhances resolution robustness and internally teaches the network to recognize and adapt to different input scales.

\paragraph{Discussion}
In Section~\ref{subsec:cls-hrr}, we highlight that classification models discard spatial information from the input image, whereas high-resolution regression tasks must preserve this information, and are therefore sensitive to the input resolution. Although our primary focus was human pose estimation, we further illustrated this fundamental difference using a semantic segmentation task. Figure~\ref{fig:tsne-cls-hrr} presents t-SNE visualizations of feature embeddings from models trained on classification and segmentation. The classification model collapsed the features from all resolutions into a single cluster, clearly confirming that it ignored the spatial structure of the input. In contrast, the segmentation model's embeddings formed distinct clusters by resolution, despite some overlap between $256\times192$ and $512\times384$, demonstrating that high-resolution representation models remain sensitive to changes in the input scale and require richer feature representations.
This pattern holds for HPE and for segmentation, indicating that the resolution sensitivity we identified is a general property of high-resolution representation models. Consequently, we believe that our RAF framework is applicable beyond human pose estimation and can benefit a wide range of high-resolution representation tasks.

\begin{figure}[t]
    \centering
    \includegraphics[width=0.95\linewidth]{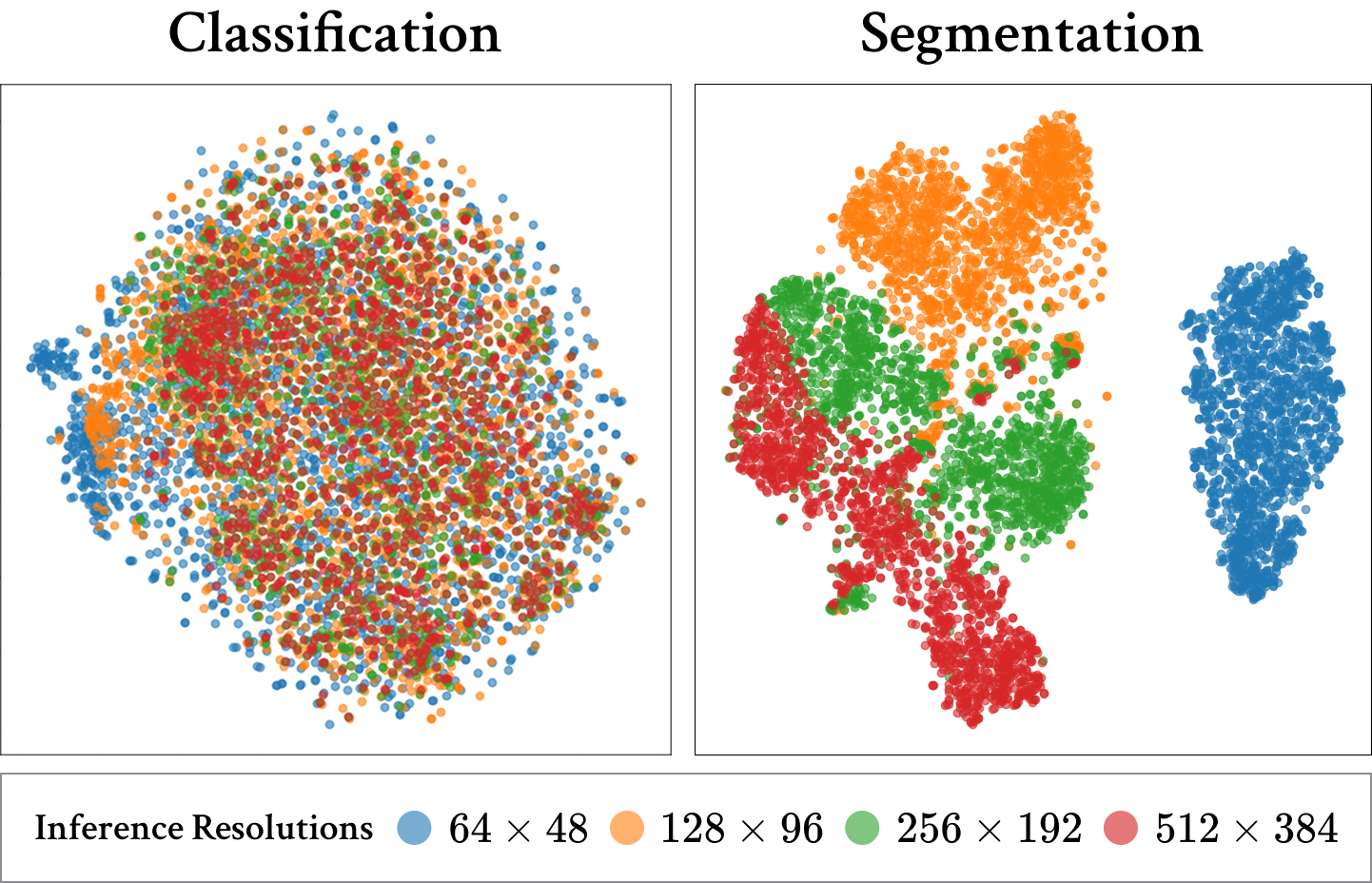}
    \caption{t-SNE visualizations of feature embeddings, colored by inference resolution, obtained from models trained on classification versus high-resolution regression tasks. Inputs at four scales are passed through a ResNet-50~\cite{resnet} model trained for classification (left) and an FCN~\cite{fcn} trained for semantic segmentation (right). Both models were taken from the standard implementations provided by Torchvision.}
    \vspace{-10pt}
    \label{fig:tsne-cls-hrr}
\end{figure}

\section{Conclusion and Future Work}

This paper identified a significant performance degradation termed \emph{ resolution drift } that occurs when clients with different input resolutions collaborate on high-resolution regression tasks in an FL setting. To address this issue, we proposed and investigated RAF, a framework that augments local training using multi-resolution knowledge distillation. By acting as a resolution-aware regularizer, RAF prevents a model from overfitting to any single scale and substantially improves resolution robustness. Our extensive experiments demonstrated that RAF effectively mitigated resolution drift, and we complemented these empirical findings with a theoretical convergence analysis. Because RAF functions as a local training augmentation process, it is orthogonal to existing FL aggregation algorithms and can seamlessly integrate into a wide range of FL pipelines.

While our evaluation focused on human pose estimation, we believe that the RAF approach potentially extends to many other non-classification tasks. In future work, we will apply RAF to additional high-resolution representation problems, such as semantic segmentation, depth estimation, and super-resolution.

\bibliographystyle{IEEEtran}

\begin{thebibliography}{10}
\providecommand{\url}[1]{#1}
\csname url@samestyle\endcsname
\providecommand{\newblock}{\relax}
\providecommand{\bibinfo}[2]{#2}
\providecommand{\BIBentrySTDinterwordspacing}{\spaceskip=0pt\relax}
\providecommand{\BIBentryALTinterwordstretchfactor}{4}
\providecommand{\BIBentryALTinterwordspacing}{\spaceskip=\fontdimen2\font plus
\BIBentryALTinterwordstretchfactor\fontdimen3\font minus \fontdimen4\font\relax}
\providecommand{\BIBforeignlanguage}[2]{{%
\expandafter\ifx\csname l@#1\endcsname\relax
\typeout{** WARNING: IEEEtran.bst: No hyphenation pattern has been}%
\typeout{** loaded for the language `#1'. Using the pattern for}%
\typeout{** the default language instead.}%
\else
\language=\csname l@#1\endcsname
\fi
#2}}
\providecommand{\BIBdecl}{\relax}
\BIBdecl

\bibitem{fedadg}
L.~Zhang, X.~Lei, Y.~Shi, H.~Huang, and C.~Chen, ``Federated learning for iot devices with domain generalization,'' \emph{IEEE Internet of Things Journal}, vol.~10, no.~11, pp. 9622--9633, 2023.

\bibitem{ce-fedavg}
J.~Mills, J.~Hu, and G.~Min, ``Communication-efficient federated learning for wireless edge intelligence in iot,'' \emph{IEEE Internet of Things Journal}, vol.~7, no.~7, pp. 5986--5994, 2020.

\bibitem{edge_intelligence}
S.~Deng, H.~Zhao, W.~Fang, J.~Yin, S.~Dustdar, and A.~Y. Zomaya, ``Edge intelligence: The confluence of edge computing and artificial intelligence,'' \emph{IEEE Internet of Things Journal}, vol.~7, no.~8, pp. 7457--7469, 2020.

\bibitem{edge_computing}
M.~S. ElBamby, C.~Perfecto, C.-F. Liu, J.~Park, S.~Samarakoon, X.~Chen, and M.~Bennis, ``Wireless edge computing with latency and reliability guarantees,'' \emph{Proceedings of the IEEE}, vol. 107, no.~8, pp. 1717--1737, 2019.

\bibitem{decentralized_edge}
W.~Y.~B. Lim, J.~S. Ng, Z.~Xiong, J.~Jin, Y.~Zhang, D.~Niyato, C.~Leung, and C.~Miao, ``Decentralized edge intelligence: A dynamic resource allocation framework for hierarchical federated learning,'' \emph{IEEE Transactions on Parallel and Distributed Systems}, vol.~33, no.~3, pp. 536--550, 2022.

\bibitem{fedavg}
B.~McMahan, E.~Moore, D.~Ramage, S.~Hampson, and B.~Ag{\"{u}era y Arcas}, ``Communication-efficient learning of deep networks from decentralized data,'' in \emph{Proceedings of the 20th International Conference on Artificial Intelligence and Statistics}, 2017.

\bibitem{federated_learning}
T.~Li, A.~K. Sahu, A.~Talwalkar, and V.~Smith, ``Federated learning: Challenges, methods, and future directions,'' \emph{IEEE Signal Processing Magazine}, vol.~37, no.~3, pp. 50--60, 2020.

\bibitem{fedprox}
T.~Li, A.~K. Sahu, M.~Zaheer, M.~Sanjabi, A.~Talwalkar, and V.~Smith, ``Federated optimization in heterogeneous networks,'' in \emph{Proceedings of the Third Conference on Machine Learning and Systems}, 2020.

\bibitem{scaffold}
S.~P. Karimireddy, S.~Kale, M.~Mohri, S.~J. Reddi, S.~U. Stich, and A.~T. Suresh, ``Scaffold: Stochastic controlled averaging for federated learning,'' in \emph{Proceedings of the 37th International Conference on Machine Learning}, 2020.

\bibitem{simple_baseline}
B.~Xiao, H.~Wu, and Y.~Wei, ``Simple baselines for human pose estimation and tracking,'' in \emph{15th European Conference on Computer Vision}, 2018.

\bibitem{vitpose}
Y.~Xu, J.~Zhang, Q.~Zhang, and D.~Tao, ``Vitpose: Simple vision transformer baselines for human pose estimation,'' in \emph{Advances in Neural Information Processing Systems 35}, 2022.

\bibitem{deep-hrnet}
K.~Sun, B.~Xiao, D.~Liu, and J.~Wang, ``Deep high-resolution representation learning for human pose estimation,'' in \emph{IEEE Conference on Computer Vision and Pattern Recognition}, 2019.

\bibitem{redifne_non-IID}
K.~Borazjani, P.~Abdisarabshali, N.~Khosravan, and S.~Hosseinalipour, ``Redefining non-iid data in federated learning for computer vision tasks: Migrating from labels to embeddings for task-specific data distributions,'' 2025.

\bibitem{hinton_kd}
G.~E. Hinton, O.~Vinyals, and J.~Dean, ``Distilling the knowledge in a neural network,'' \emph{CoRR}, vol. abs/1503.02531, 2015.

\bibitem{self_distillation}
H.~Mobahi, M.~Farajtabar, and P.~Bartlett, ``Self-distillation amplifies regularization in hilbert space,'' in \emph{Advances in Neural Information Processing Systems 33}, vol.~33, 2020, pp. 3351--3361.

\bibitem{kd_lsr}
L.~Yuan, F.~E.~H. Tay, G.~Li, T.~Wang, and J.~Feng, ``Revisiting knowledge distillation via label smoothing regularization,'' in \emph{IEEE Conference on Computer Vision and Pattern Recognition}, 2020.

\bibitem{resformer}
R.~Tian, Z.~Wu, Q.~Dai, H.~Hu, Y.~Qiao, and Y.-G. Jiang, ``Resformer: Scaling vits with multi-resolution training,'' in \emph{IEEE Conference on Computer Vision and Pattern Recognition}, 2023.

\bibitem{fednova}
J.~Wang, Q.~Liu, H.~Liang, G.~Joshi, and H.~V. Poor, ``Tackling the objective inconsistency problem in heterogeneous federated optimization,'' in \emph{Advances in Neural Information Processing Systems 33}, 2020.

\bibitem{moon}
Q.~Li, B.~He, and D.~Song, ``Model-contrastive federated learning,'' in \emph{IEEE Conference on Computer Vision and Pattern Recognition}, 2021.

\bibitem{feddyn}
C.~Jin, X.~Chen, Y.~Gu, and Q.~Li, ``Feddyn: A dynamic and efficient federated distillation approach on recommender system,'' in \emph{28th IEEE International Conference on Parallel and Distributed Systems}, 2022.

\bibitem{transformer}
A.~Vaswani, N.~Shazeer, N.~Parmar, J.~Uszkoreit, L.~Jones, A.~N. Gomez, L.~Kaiser, and I.~Polosukhin, ``Attention is all you need,'' in \emph{Advances in Neural Information Processing Systems 30}, 2017, pp. 5998--6008.

\bibitem{bert}
J.~Devlin, M.-W. Chang, K.~Lee, and K.~Toutanova, ``Bert: Pre-training of deep bidirectional transformers for language understanding,'' in \emph{Proceedings of the 2019 Conference of the North American Chapter of the Association for Computational Linguistics}, 2019.

\bibitem{gpt4}
OpenAI, ``Gpt-4 technical report,'' \emph{CoRR}, vol. abs/2303.08774, 2023.

\bibitem{llama}
H.~T. et~al., ``Llama: Open and efficient foundation language models,'' \emph{CoRR}, vol. abs/2302.13971, 2023.

\bibitem{lcm}
L.~team~et al., ``Large concept models: Language modeling in a sentence representation space,'' \emph{CoRR}, vol. abs/2412.08821, 2024.

\bibitem{vit}
A.~Dosovitskiy, L.~Beyer, A.~Kolesnikov, D.~Weissenborn, X.~Zhai, T.~Unterthiner, M.~Dehghani, M.~Minderer, G.~Heigold, S.~Gelly, J.~Uszkoreit, and N.~Houlsby, ``An image is worth 16x16 words: Transformers for image recognition at scale,'' in \emph{9th International Conference on Learning Representations}, 2021.

\bibitem{deit}
H.~Touvron, M.~Cord, M.~Douze, F.~Massa, A.~Sablayrolles, and H.~J{\'{e}}gou, ``Training data-efficient image transformers \& distillation through attention,'' in \emph{Proceedings of the 38th International Conference on Machine Learning}, 2021.

\bibitem{dit}
W.~Peebles and S.~Xie, ``Scalable diffusion models with transformers,'' in \emph{IEEE International Conference on Computer Vision}, 2023.

\bibitem{dino_v2}
M.~O. et~al., ``Dinov2: Learning robust visual features without supervision,'' \emph{Transactions on Machine Learning Research}, vol. 2024, 2024.

\bibitem{wav2vec}
A.~Baevski, Y.~Zhou, A.~Mohamed, and M.~Auli, ``wav2vec 2.0: A framework for self-supervised learning of speech representations,'' in \emph{Advances in Neural Information Processing Systems 33}, 2020.

\bibitem{audiolm}
Z.~B. et~al., ``Audiolm: A language modeling approach to audio generation,'' \emph{IEEE/ACM Transactions on Audio, Speech, and Language Processing}, vol.~31, pp. 2523--2533, 2023.

\bibitem{ast}
Y.~Gong, Y.-A. Chung, and J.~R. Glass, ``Ast: Audio spectrogram transformer,'' in \emph{Interspeech 2021}, 2021, pp. 571--575.

\bibitem{clip}
A.~R. et~al., ``Learning transferable visual models from natural language supervision,'' in \emph{Proceedings of the 38th International Conference on Machine Learning}, 2021.

\bibitem{blip}
J.~Li, D.~Li, S.~Savarese, and S.~C.~H. Hoi, ``Blip-2: Bootstrapping language-image pre-training with frozen image encoders and large language models,'' in \emph{International Conference on Machine Learning}, 2023.

\bibitem{llava}
H.~Liu, C.~Li, Q.~Wu, and Y.~J. Lee, ``Visual instruction tuning,'' in \emph{Advances in Neural Information Processing Systems 36}, 2023.

\bibitem{fl_transformer}
Y.~Gao, Z.~Hou, C.~Yang, Z.~Li, H.~Yu, and X.~Li, ``The prospect of enhancing large-scale heterogeneous federated learning with foundation models,'' in \emph{IEEE International Conference on Multimedia and Expo}, 2024.

\bibitem{fedyolo}
X.~Z. et~al., ``Fedyolo: Augmenting federated learning with pretrained transformers,'' \emph{CoRR}, vol. abs/2307.04905, 2023.

\bibitem{hrnet-seg}
J.~Wang, K.~Sun, T.~Cheng, B.~Jiang, C.~Deng, Y.~Zhao, D.~Liu, Y.~Mu, M.~Tan, X.~Wang, W.~Liu, and B.~Xiao, ``Deep high-resolution representation learning for visual recognition,'' \emph{IEEE Transactions on Pattern Analysis and Machine Intelligence}, vol.~43, no.~10, pp. 3349--3364, 2021.

\bibitem{byol}
J.-B.~G. et~al., ``Bootstrap your own latent -- a new approach to self-supervised learning,'' in \emph{Advances in Neural Information Processing Systems 33}, 2020.

\bibitem{chizat2019lazy}
L.~Chizat, E.~Oyallon, and F.~Bach, ``On lazy training in differentiable programming,'' 2019.

\bibitem{lyu2020margin}
K.~Lyu and J.~Li, ``Gradient descent maximizes the margin of homogeneous neural networks,'' in \emph{International Conference on Learning Representations}, 2019.

\bibitem{papyan2020prevalence}
V.~Papyan, X.~Han, and D.~L. Donoho, ``Prevalence of neural collapse during the terminal phase of deep learning training,'' \emph{Proceedings of the National Academy of Sciences}, vol. 117, no.~40, pp. 24\,652--24\,663, 2020.

\bibitem{fedavg_convergence}
X.~Li, K.~Huang, W.~Yang, S.~Wang, and Z.~Zhang, ``On the convergence of fedavg on non-iid data,'' in \emph{8th International Conference on Learning Representations}, 2020.

\bibitem{mpii}
M.~Andriluka, L.~Pishchulin, P.~V. Gehler, and B.~Schiele, ``2d human pose estimation: New benchmark and state of the art analysis,'' in \emph{IEEE Conference on Computer Vision and Pattern Recognition}, 2014.

\bibitem{resnet}
K.~He, X.~Zhang, S.~Ren, and J.~Sun, ``Deep residual learning for image recognition,'' in \emph{IEEE Conference on Computer Vision and Pattern Recognition}, 2016.

\bibitem{fcn}
J.~Long, E.~Shelhamer, and T.~Darrell, ``Fully convolutional networks for semantic segmentation,'' in \emph{IEEE Conference on Computer Vision and Pattern Recognition}, 2015.

\end{thebibliography}

\newpage

\end{document}